\def\eqref#1{equation~\ref{#1}}
\def\1{\bm{1}}
\DeclareMathAlphabet{\mathsfit}{\encodingdefault}{\sfdefault}{m}{sl}
\SetMathAlphabet{\mathsfit}{bold}{\encodingdefault}{\sfdefault}{bx}{n}
\def\w{{\bf w}}
\let\AND\relax
\newtheorem{theorem}{Theorem}
\newtheorem{lemma}{Lemma}
\title{Federated Learning under Partially Class-Disjoint Data via \\ Manifold Reshaping}
\author{\name Ziqing Fan, Jiangchao Yao\textsuperscript{\Letter}, Ruipeng Zhang \email \{zqfan\_knight, sunarker, zhangrp\}@sjtu.edu.cn \\
      \addr Cooperative Medianet Innovation Center, Shanghai Jiao Tong University\\
      Shanghai AI Laboratory \\ \\
      \AND
      \name Lingjuan Lyu \email Lingjuan.Lv@sony.com \\
      \addr Sony AI \\ \\
      \AND
      \name Ya Zhang, Yanfeng Wang\textsuperscript{\Letter} \email \{ya\_zhang, wangyanfeng\}@sjtu.edu.cn \\
      \addr Cooperative Medianet Innovation Center, Shanghai Jiao Tong University\\
Shanghai AI Laboratory \\
}
\begin{document}

\maketitle

\begin{abstract}
Statistical heterogeneity severely limits the performance of federated learning (FL), motivating several explorations \textit{e.g.,} FedProx, MOON and FedDyn, to alleviate this problem. Despite effectiveness, their considered scenario generally requires samples from almost all classes during the local training of each client, although some covariate shifts may exist among clients. 
   In fact, the natural case of partially class-disjoint data (PCDD), where each client contributes a few classes (instead of all classes) of samples, is practical yet underexplored. Specifically, the unique collapse and invasion characteristics of PCDD can induce the biased optimization direction in local training, which prevents the efficiency of federated learning. To address this dilemma, we propose a manifold reshaping approach called FedMR to calibrate the feature space of local training. Our FedMR adds two interplaying losses to the vanilla federated learning: one is intra-class loss to decorrelate feature dimensions for anti-collapse; and the other one is inter-class loss to guarantee the proper margin among categories in the feature expansion. We conduct extensive experiments on a range of datasets to demonstrate that our FedMR achieves much higher accuracy and better communication efficiency. Source code
is available at:~\href{https://github.com/MediaBrain-SJTU/FedMR.git}{https://github.com/MediaBrain-SJTU/FedMR}.
\end{abstract}

\section{Introduction}

Federated learning~(\cite{fedavg,li2020federated,yang2019federated}) has drawn considerable attention due to the increasing requirements on data protection~(\cite{privacy1,privacy2,privacy3,privacy4,lyu2020threats_privacy5}) in real-world applications like medical image analysis~(\cite{medicalimage1,medicalimage2,yin2022efficient_med1,dou2021federated_med2,jiang2022harmofl_med3,zhou1,zhou2}) and autonomous driving~(\cite{driving1,driving2}). Nevertheless, the resulting challenge of data heterogeneity severely limits the application of machine learning algorithms~(\cite{noniid1}) in federated learning. This motivations a plenty of explorations to address the statistical heterogeneity issue and improve the efficiency~(\cite{advanced,fednova,noniid2}).

Existing approaches to address the statistical heterogeneity can be roughly summarized into two categories. One line of research is to constrain the parameter update in local clients or in the central server. For example, FedProx~(\cite{fedprox}), FedDyn~(\cite{fedyn}) and FedDC~(\cite{feddc}) explore how to reduce the variance or calibrate the optimization by adding the proximal regularization on parameters in FedAvg~(\cite{fedavg}). The other line of research focuses on constraining the representation from the model to implicitly affect the update. FedProc~(\cite{fedproc}) and FedProto~(\cite{fedproto}) introduce prototype learning to help local training, and MOON~(\cite{moon}) utilizes contrastive learning to minimize the distance between representations learned by local model and global model, and maximize the distance between representations learned by local model and previous local model. However, all these methods validate their efficiency mostly under the support from all classes of samples in each client while lacking a well justification on a natural scenario, namely \emph{partially class-disjoint data} \textit{w.r.t.} classes.
\begin{figure}
\centering  
\hspace{1.4cm}
\subfigure[Typical FL.]{
\label{fig:introa}
\includegraphics[width=0.25\textwidth]{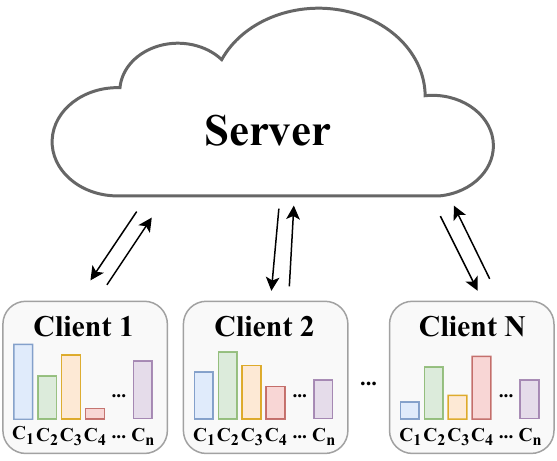}}
\subfigure[FL under PCDD.]{
\label{fig:introb}
\vspace{-25pt}
\includegraphics[width=0.25\textwidth]{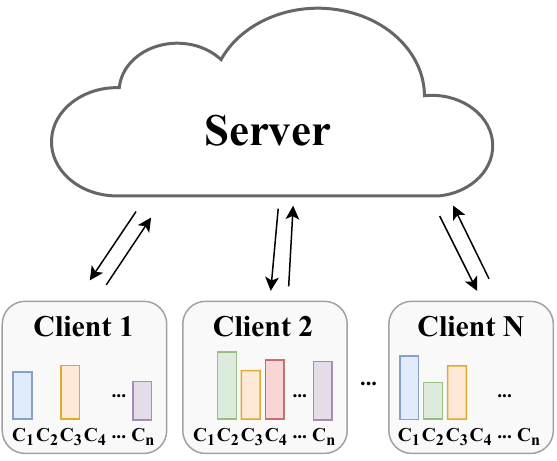}}
\newline
\subfigure[Global  /  Collapsed  /  Reshaped~(by FedMR) feature space.]{
\label{fig:introc}
\includegraphics[width=0.55\textwidth]{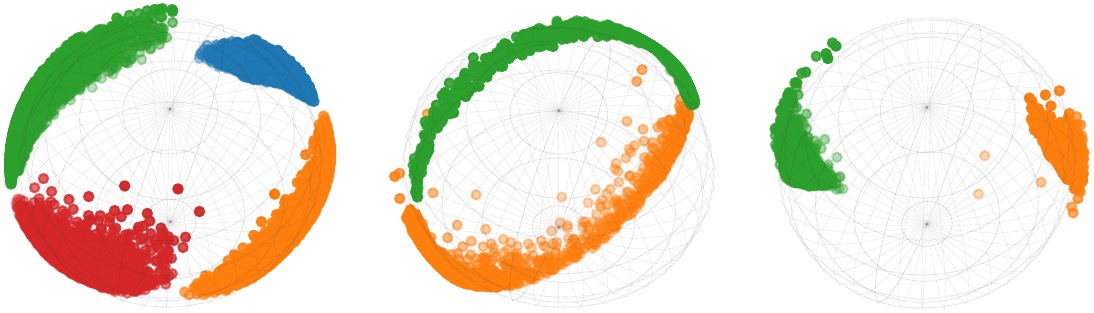}}
\vspace{-10pt}
\caption{Federated learning under partially class-disjoint data~(PCDD).}
\label{fig:heatmap}
\vspace{-15pt}
\end{figure}

As illustrated in Figure~\ref{fig:introa}, in typical federated learning, each 
client usually contains all classes of samples but under different covariate shifts, and all clients work together to train a global model. However, in the case of PCDD (Figure~\ref{fig:introb}), there are only a small subset of categories in each client and all clients together provide information of all classes. Such a situation is very common in real-world applications. For example, there are shared and distinct Thyroid diseases in different hospitals due to regional diversity~(\cite{rational1}). Hospitals from different regions can construct a federation to learn a comprehensive model for the diagnostic of Thyroid diseases but suffer from the PCDD challenge. We conduct a toy study on a simulated dataset~(see details in the Appendix~\ref{sec:simulation_appendix}), and visualize the feature space under centralized training (the left panel of Figure~\ref{fig:introc}) and local training under PCDD (the middle panel of Figure~\ref{fig:introc}) by projecting on the unit sphere. As shown in Figure~\ref{fig:introc}, PCDD induces a dimensional collapse onto a narrow area due to the lack of support from all classes, and causes a space invasion to the missing classes. Previous approaches such as FedProc and MOON may implicitly constrain the space invasion by utilizing class prototypes or global features generated from the global model, and methods like FedProx, FedDyn, and FedNova could also help a bit from the view of optimization. However, these methods are not oriented towards the PCDD issue, and they are inefficient to avoid the collapse and invasion characteristics of PCDD and achieve sub-optimal performance from the both view of experimental performance shown in Table~\ref{tab:acc} and the feature variance of all methods shown in Table~\ref{tab:decorrelating}.

To address this dilemma, we propose a manifold-reshaping approach called FedMR to properly prevent the degeneration caused by the locally class missing. FedMR introduces two interplaying losses: one is intra-class loss to decorrelate feature space for anti-collapse; and another one is the inter-class loss to guarantee the proper margin among categories by means of global class prototypes. The right panel of Figure~\ref{fig:introc} provides a rough visualization of FedMR. Theoretically, we analyze the benefit from the interaction of the intra-class loss and the inter-class loss under PCDD, and empirically, we verify the effectiveness of FedMR compared with the current state-of-the-art methods. Our contributions can be summarized as follows:
\begin{itemize}
\vspace{-10pt}
\item  We are among the first attempts to study dimensional collapse and space invasion challenges caused by PCDD in Generic FL that degenerates embedding space and thus limits the model performance.
\vspace{-10pt}
\item We introduce a approach termed as FedMR, which decorrelates the feature space to avoid dimensional collapse and constructs a proper inter-class margin to prevent space invasion. Our theoretical analysis confirms the rationality of the designed losses and their benefits to address the dilemma of PCDD. 
\item We conduct a range of experiments on multiple benchmark datasets under PCDD and a real-world disease dataset to demonstrate the advantages of FedMR over the state-of-the-art methods. We also develop several variants of FedMR to consider the communication cost and privacy concerns.
\end{itemize}

\section{Related Works}\label{sec:related}
 \subsection{Federated Learning}
There are extensive works to address the \emph{statistical heterogeneity} in federated learning, which induces the bias of local training due to the covariate shifts among clients~(\cite{noniid1,noniid2,zhang2}). A line of research handles this problem by adding constraints like normalization or regularization on model weights in the local training or in the server aggregation. FedProx~(\cite{fedprox}) utilizes a proximal term to limit the local updates so as to reduce the bias, and FedNova~(\cite{fednova}) introduces the normalization on total gradients to eliminate the objective inconsistency. FedDyn~(\cite{fedyn}) makes the global model and local models approximately aligned in the limit by proposing a dynamic regularizer for each client at each round. FedDC~(\cite{feddc}) reduces the inconsistent optimization on the client-side by local drift decoupling and correction. Another line of research focuses on constraining representations from local models and the global model. MOON~(\cite{moon}) corrects gradients in local training by contrasting the representation from local model and that from global model. FedProc~(\cite{fedproc}) utilizes prototypes as global information to help correct local representations. Our FedMR is also conducted on representations of samples and classes but follows a totally different problem and spirit.

\subsection{Representation Learning}
The collapse problem is also an inevitable concern in the area of representation learning. In their research lines, there are several attempts to prevent the potential collapse issues. For example, in self-supervised learning, Barlow Twins, VICReg and Shuffled-DBN~(\cite{self1,self2,self3}) manipulate the rank of the (co-)variance matrix to prevent the potential collapse. In contrastive learning, DirectCLR~(\cite{contrastive}) directly optimizes the representation space without an explicit trainable classifier to promote a larger feature diversity. In incremental learning, CwD~(\cite{incremental}) applies a similar technique to prevent dimensional collapse in the initial phase. In our PCDD case, it is more challenging, since we should not only avoid collapse but also avoid the space invasion in the feature space.

\subsection{Federated Prototype Learning}
In many vision tasks~(\cite{prototype1,prototype2,prototype3}), prototypes are the mean values of representations of a class and contain information like feature structures and relationships of different classes. Since prototypes are population-level statistics of features instead of raw features, which are relatively safe to share, prototype learning thus has been applied in federated learning. In Generic FL, FedProc~(\cite{fedproc}) utilizes the class prototypes as global knowledge to help correct local training. In Personalized FL, FedProto~(\cite{fedproto}) shares prototypes instead of local gradients to reduce communication costs. We also draw spirits from prototype learning to handle PCDD. Besides, to make fair comparison to methods without prototypes and better reduce extreme privacy concerns, we conduct a range of auxiliary experiments in Section~\ref{sec:concerns}.

\section{The Proposed Method}\label{sec:reshape}
\subsection{Preliminary}
\paragraph{PCDD Definition.}
There are many nonnegligible real-world PCDD scenarios. ISIC2019 dataset~(\cite{isic1,isic2,isic3}), a region-driven \textit{subset} of types of Thyroid diseases in the hospital systems, is utilized in our experiments. In landmark detection~(\cite{landmark}) for thousands of categories with data locally preserved, most contributors only have a \textit{subset} of categories of landmark photos where they live or traveled before, which is also a scenario for the federated PCDD problem. To make it clear, we first define some notations of the partially class-disjoint data situation in federated learning. Let $C$ denote the collection of full classes and P denote the set of all local clients. Considering the real-world constraints like privacy or environmental limitation, each local client may only own the samples of partial classes. Thus, for the $k$-th client $P_k$, its corresponding local dataset $D_k$ can be expressed as$D_k=\{(x_{k,i}, y_{k,i})|y_{k,i}=c\in C_k\}, \text{~where~} C_k \subsetneq C.$ The number of samples of the class $c$~($c\in C_k$) in $P_k$ is $N^c_k$. 
We denote a local model $f(\cdot;\w_k)$ on all clients as two parts: a backbone network $f_1(\cdot; \w_{k,1})$ and a linear classifier $f_2(\cdot; \w_{k,2})$. The loss of the $k$-th client can be formulated as
\begin{equation} \label{eq:z}
\small
    \ell^{\text{cls}}_k(D_k;w_k) = \frac{1}{N_k}\sum_{i=1}^{N_k}\ell(y_{k,i}, f_2(z_{k,i};w_{k,2}))\big|_{z_{k,i}=f_1(x_{k,i};w_{k,1})}, \nonumber
\end{equation}
where $z_{k,i}$ is the feature representation of the input $x_{k,i}$ and $\ell(\cdot, \cdot)$ is the loss measure. Under PCDD, we can empricially find the dimensional collapse and the space invasion problems about representation.

\paragraph{FedAvg.}  The vanilla federated learning via FedAvg consists of four steps~(\cite{fedavg}): 1) In round $t$, the server distributes the global model $\w^{t}$ to clients that participate in the training; 2) Each local client receives the model and continues to train the model, \textit{e.g.,} the $k$-th client conducts the following,
\begin{equation}\label{eq:local}
\small
   \w_{k}^t\leftarrow \w_{k}^t - \eta \nabla \ell_k(b_{k}^t;\w_{k}^t),
\end{equation}
where $\eta$ is the learning rate, and $b^{t}_k$ is a mini-batch of training data sampled from the local dataset $\mathcal{D}_k$. After $E$ epochs, we acquire a new local model $\w^{t}_k$;
3) The updated models are then collected to the server as $\{\w^{t}_{1},\w^{t}_{2},\dots,\w^{t}_{K}\}$; 4) The server performs the following aggregation to acquire a new global model $\w^{t+1}$,
\begin{equation} \label{eq:fedavg}
\small
    \w^{t+1}\leftarrow \sum_{k=1}^K p_k \w^{t}_{k},
\end{equation}
where $p_k$ is the proportion of sample number of the $k$-th client to the sample number of all the participants, \textit{i.e.,} $p_k=N_k\slash\sum_{k'=1}^K N_{k'}$. When the maximal round $T$ reaches, we will have the final optimized model $\w^T$.

\begin{figure}
	\centering
	\includegraphics[width=0.4\textwidth]{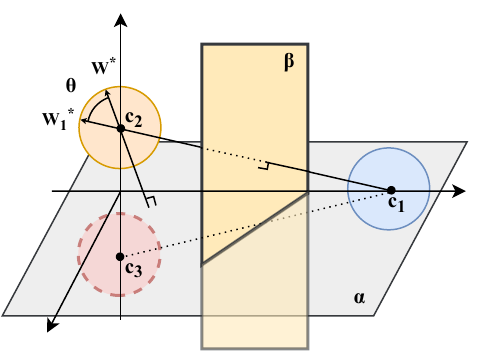}
 \vspace{-6pt}
\caption{An illustration about the shift of the optimization direction under PCDD. Here, we assume our client contains two classes $c_1$ and $c_2$ with one missing class $c_3$. $\w^*$ is the optimal classifier direction for $c_2$ (perpendicular to the plane $\alpha$) when all classes exist, and $\w_1^*$ is the learned classifier direction when $c_3$ is missing, which can be inferred by the decision plane $\beta$ between $c_1$ and $c_2$. As can be seen, PCDD leads to the angle shift $\theta$ in the optimization.}
	\label{fig:motivation}
\end{figure}
\subsection{Motivation}
In Figure~\ref{fig:motivation}, we illustrate a low-dimensional example to characterize the directional shift of the \emph{local} training under PCDD on the client side. In the following, we use the parameter aggregation of a linear classification to study the directional shift of \emph{global} model in the server, which further clarifies the adverse effect of PCDD. Similar to Figure~\ref{fig:motivation}, let $c_1$, $c_2$ and $c_3$ denote three classes of samples on a circular face respectively centered at (1, 0), (0, $\sqrt{3}$) and (0, -$\sqrt{3}$) with radius r=$\frac{1}{2}$ under a uniform distribution. Then, if there are samples of all classes in each local client, we will get the optimal weight for all categories as follows:
$$
\small
w^* = \left[
    \begin{array}{cc}
      1 & 0  \\
      -\frac{\sqrt{3}}{2} & \frac{1}{2} \\
      -\frac{\sqrt{3}}{2} & -\frac{1}{2} \\
    \end{array}
\right].
$$
Note that, we omit the bias term in linear classification for simplicity. Conversely, among total three participants, if each participant 
only has the samples of two classes, e.g., ($c_1$, $c_2$), ($c_1$, $c_3$) and ($c_2$, $c_3$) respectively, then their learned weights can be inferred as follows: 
\begin{align}
\small
\begin{split}
w_1^*,&~w_2^*,~w_3^*= 
\left[
    \begin{array}{cc}
      \frac{1}{2} & -\frac{\sqrt{3}}{2} \\
      -\frac{1}{2} & \frac{\sqrt{3}}{2} \\
      0 & 0 
    \end{array}
\right],~
\left[
    \begin{array}{cc}
      \frac{1}{2} & \frac{\sqrt{3}}{2} \\
      0 & 0 \\
      -\frac{1}{2} & -\frac{\sqrt{3}}{2} 
    \end{array}
\right],~
\left[
    \begin{array}{cc}
      0 & 0 \\
      0 & 1 \\
      0 & -1 
    \end{array}
\right].\nonumber
\end{split}
\end{align}
After the server aggregation, we have the estimated weight
$$
\small
\hat{w}^* = \left[
    \begin{array}{cc}
      \frac{1}{3} & 0  \\
      -\frac{1}{6} & \frac{\sqrt{3}+2}{6} \\
      -\frac{1}{6} & -\frac{\sqrt{3}+2}{6} \\
    \end{array}
\right].
$$
Then, we can find that except the difference on amplitude between $w^*$ and $\hat{w}^*$, a more important issue is the optimization direction for $c_2$ (or $c_3$) shifts about 45$^\circ$ by computing the angle between vector $(-\frac{\sqrt{3}}{2},\frac{1}{2})$ and vector $(-\frac{1}{6}, \frac{\sqrt{3}+2}{6})$ (or between vector $(-\frac{\sqrt{3}}{2},-\frac{1}{2})$ and vector $(-\frac{1}{6}, -\frac{\sqrt{3}+2}{6})$). Actually, the angle shift can be enlarged in some real-world applications, when the hard negative classes are missing. However, if we can have the statistical centroid of the locally missing class in the local client, namely $c_3$ in the case of Figure~\ref{fig:motivation}, it is easy to find that the inferred optimal $\hat{w}^*$ is same to $w^*$ as the decision plane can be normally characterized with the support of the single point $c_3$. This inspires us to design the subsequent method\footnote{Note that, we would like to point out that in the real-world case, we cannot acquire such statistical centroid in advance but have to resort to the training process along with the special technique design.}.

\subsection{Manifold Reshaping}
As the aforementioned analysis, PCDD in federated learning leads to the directional shift of optimization both in the local models and in the global model. An empirical explanation is that the feature representation of the specific class that should support classification is totally missing, inducing the feature representation of other observed classes arbitrarily distributes as a greedy collapsed manifold, as shown in Figure~\ref{fig:introc}. To address this problem, we explore a manifold-reshaping method from both the intra-class perspective and the inter-class perspective. In the following, we will present two interplaying losses and our framework.

\subsubsection{Intra-Class Loss}\label{sec:intra}
The general way to prevent the representation from collapsing into a low-dimensional manifold, is to decorrelate dimensions for different patterns and expand the intrinsic dimensionality of each category. Such a goal can be implemented by manipulating the rank of the covariance matrix regarding representation. Specifically, for each client, we can first compute the class-level normalization for the representation $z^c_{k,i}\in \mathbb{R}^d$ as $\hat{z}_{k,i}^c = \frac{z_{k,i}^c-\mu_k^c}{\sigma_k^c}$, where $\mu^c_k$ and $\sigma_k^c$ are the mean and standard deviation of features belonging to class $c$ and calculated as: $\mu_k^c=\frac{1}{n_{k,c}}\sum_{i=1}^{n_{k,c}} z_{k, i}^c$ and $\sigma_k^c=\sqrt{\frac{1}{n_{k,c}}\sum_{i=1}^{n_{k,c}} (z_{k, i}^c-\mu_k^c)^2}$. Then, we compute an intra-class covariance matrix based on the above normalization for each observed class in the $k$-th client:
\begin{equation}
\small
M_k^c = \frac{1}{N_k^c-1}\sum_{i=1}^{N_k^c}\left(\hat{z}_{k,i}^{c} \left(\hat{z}_{k,i}^{c}\right)^\top\right). \nonumber
\end{equation}
Since each eigenvalue of $M_k^c\in \mathbb{R}^{d\times d}$ characterizes the importance of a feature dimension within class, we can make them distributed uniformly to prevent the dimensional collapse of each observed class. However, considering the learnable pursuit of machine learning algorithms, we actually cannot directly optimize eigenvalues to reach this goal. Fortunately, it is possible to use an equivalent objective as an alternative, which is clarified by the following lemma. 
\begin{lemma}
    \label{lema:intra}
    Assuming a covariance matrix $M\in\mathbf{R}^{d\times d}$ computed from the feature of each sample with the standard normalization, and its eigenvalues $\{\lambda_1, \lambda_2,...,\lambda_d\}$, we will have the following equality that satisfied
    \begin{equation}
    \small
    \sum_{i=1}^{d}{(\lambda_i-\frac{1}{d}\sum_{j=1}^{d}{\lambda_j})^2}=||M||^2_F - d. \nonumber
    \end{equation}
\end{lemma}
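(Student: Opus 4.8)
The plan is to reduce the identity to two elementary facts about the (real symmetric) matrix $M$: first, that its squared Frobenius norm equals the sum of squared eigenvalues; and second, that the standard per-coordinate normalization forces every diagonal entry of $M$ to equal $1$, so that $\Tr M = \sum_i \lambda_i = d$. Granted these, the claimed equality is nothing but the bias--variance-style expansion of $\sum_i (\lambda_i - \bar\lambda)^2$ with $\bar\lambda := \tfrac1d\sum_j\lambda_j$.

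Concretely, I would proceed in three steps. \textbf{Step 1.} Since $M$ is a covariance matrix it is real symmetric, so by the spectral theorem $M = Q\,\mathrm{diag}(\lambda_1,\dots,\lambda_d)\,Q^\top$ for an orthogonal $Q$; hence $\|M\|_F^2 = \Tr(M^\top M) = \Tr(M^2) = \sum_{i=1}^d \lambda_i^2$. \textbf{Step 2.} Because each feature coordinate has been standardized to zero mean and unit variance before $M$ is formed, every diagonal entry $M_{jj}$ equals $1$; therefore $\sum_{i=1}^d\lambda_i = \Tr M = \sum_{j=1}^d M_{jj} = d$, i.e. $\bar\lambda = 1$. \textbf{Step 3.} Expand
$$\sum_{i=1}^d\bigl(\lambda_i-\bar\lambda\bigr)^2 = \sum_{i=1}^d\lambda_i^2 - 2\bar\lambda\sum_{i=1}^d\lambda_i + d\bar\lambda^2 = \sum_{i=1}^d\lambda_i^2 - d\bar\lambda^2,$$
and substitute $\bar\lambda = 1$ together with $\sum_i\lambda_i^2 = \|M\|_F^2$ from Steps 1--2 to obtain $\|M\|_F^2 - d$, which is the assertion.

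The only point that requires care --- and the only place the hypotheses genuinely bite --- is Step 2: one must ensure the ``standard normalization'' is applied coordinate-wise so that $M_{jj}=1$ exactly. (If $M$ were instead formed with an $N^c_k-1$ denominator while the standard deviation used $N^c_k$, the diagonal would be $N^c_k/(N^c_k-1)$ rather than $1$, and the right-hand side would absorb that factor; the lemma is stated in the idealized regime where the two normalizations coincide.) Steps 1 and 3 are routine linear algebra and a one-line algebraic expansion, with no real obstacle.
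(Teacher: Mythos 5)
Your proof is correct and follows essentially the same route as the paper's: diagonalize $M$ to get $\|M\|_F^2=\sum_i\lambda_i^2$, use the standard normalization to conclude $\frac1d\sum_j\lambda_j=1$, and expand the square. Your Step 2 is in fact slightly more careful than the paper, which simply asserts $\bar\lambda=1$ without noting (as you do) that this requires the covariance denominator and the standard-deviation denominator to coincide.
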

\noindent The complete proof is summarized in the Appendix~\ref{sec:proof_of_lemma1_appendix}. From Lemma~\ref{lema:intra}, we can see that pursuing the uniformity of the eigenvalues for the covariance matrix can transform into minimizing the Frobenius norm of the covariance matrix. Therefore, our intra-class loss to prevent the undesired dimensional collapse for observed classes is formulated as
\begin{equation}\label{eq:intra}
\small
\ell_k^{\text{intra}} =\frac{1}{|C_k|}\sum_{c\in C_k}{||M_k^c||^2_F}.
\end{equation}

\subsubsection{Inter-Class Loss}\label{sec:inter}
Although the intra-class loss helps decorrelate the feature dimensions to prevent collapse, the resulting space invasion for the missing classes can be concomitantly exacerbated. Thus, it is important to guarantee the proper space of the missing classes in the expansion as encouraged by~\eqref{eq:intra}. To address this problem, we maintain a series of global class prototypes and transmit them to local clients as support of the missing classes in the feature space. Concretely, we first compute the class prototypes in the $k$-th client as the average of feature representations~(\cite{prototype1,prototype2,prototype3}):
\begin{equation} \label{eq:prototype}
\small
    \left\{g_k^c\bigg|g_k^c\leftarrow \frac{1}{N_k^c}\sum_{i=1}^{N_k^c}z^{c}_{k,i} \text{~for~}c\in C_k\right\}. \nonumber
\end{equation}
Then, all client-level prototypes are submitted to the server along with local models in federated learning. In the central server, the global prototypes for all classes are updated as
\begin{equation} \label{eq:avgprototype}
\small
    \left\{g_c^t\bigg|g_c^t\leftarrow \sum_{k=1}^{K} p_k^c g_k^c \text{~for~}c\in C \right\}, \nonumber
\end{equation}
where $g_c^t$ is the global prototype of the $c$-th class in round $t$ and $p_k^c=N_k^c\slash\sum_{k=1}^{K}N_k^c$. In the next round, the central server distributes the global prototypes to all clients as the references to avoid the space invasion. 
Formally, we construct the following margin loss by contrasting the distances from prototypes to the representation of the sample.
\begin{equation}\label{eq:inter}
\small
\ell_k^{\text{inter}}=\frac{1}{|C_k|(|C_k|-1)}\sum_{c_i\in C_k}\sum_{c_j\in C_k \backslash c_i}{D_{c_i,c_j}},  \\
\end{equation}
where $D_{c_i,c_j}$ is defined as: 
$$
D_{c_i,c_j}=\frac{1}{N_k^c}\sum_{n=1}^{N_k^c}\max\{||z_{k,n}^{c_i}-g_{c_i}^t||-||z_{k,n}^{c_i}-g_{c_j}^t)||,0\}.
$$
In the following, we use a theorem to show how inter-class loss jointly with intra-class loss makes the representation of the missing classes approach to the optimal.

\begin{figure*}
    \centering
    \includegraphics[width=0.88\linewidth]{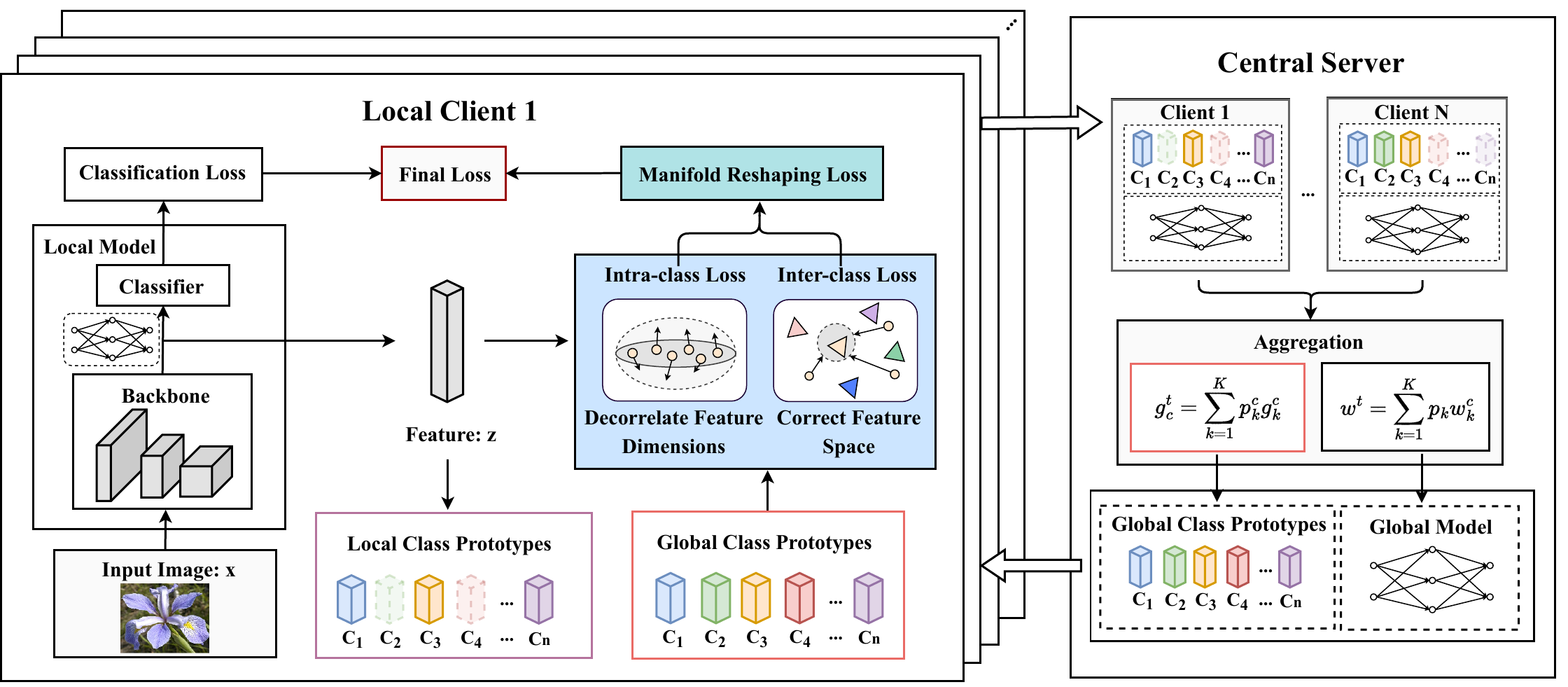}
    \caption{The framework of FedMR. On the client side, except the vanilla training with the classification loss, the manifold-reshaping parts, \textit{i.e.,} the intra-class loss and the inter-class loss, respectively help conduct the feature decorrelation to avoid the dimensional collapse, and leverage the global prototypes to construct the proper margin among classes to prevent the space invasion. On the server side, except the model aggregation, the global class prototypes are also the reference for missing classes participating in the local training.}\label{fig:structure}
\end{figure*}
\begin{algorithm}[!t]
    \caption{FedMR}
    \label{alg:fedavg}
    \textbf{Input}: a set of $K$ clients 
    that participate in each round, the initial model weights $\w^{0}$, the maximal round $T$, the learning rate $\eta$, the local training epochs $E$.
    \begin{algorithmic}
    \FOR{$t = 0,1,\dots,T-1$}
        \STATE randomly sample $K$ clients
        \STATE updates global model weights and global class prototypes~($\w^t \leftarrow \sum_{k=1}^{K}{p^t_k\w^{t-1}_{k}}$\colorbox{green!20}{$\forall{c},g^{t}_c\leftarrow \sum_{k=1}^{K}{p^c_k g^c_{k}}$}).
        \STATE distribute $\w^t$ \colorbox{green!20}{and $G_c\{g_1^t,g_2^t,...,g_C^t\}$} to the $K$ clients.
        \ONCLIENT{$\forall{k \in K}$}
            \STATE $\w_k^t \leftarrow \w^t$.
            \FOR{$\tau=0,1,...,E-1$} 
                \STATE sample a mini-batch from local dataset and perform updates(\colorbox{green!20}{$\w_k^{t} \leftarrow \w_{k}^t - \eta \nabla \mathcal{L}_k(b_{k}^t,G_c;\w_{k}^t)$} ).
            \ENDFOR
            \STATE \colorbox{green!20}{update local class prototypes~($\forall{c},~ g_k^c\leftarrow \sum_{i=1}^{n_k^c} \frac{1}{n_k^c}z^{c}_{i}$)}, and submit $\w^{t}_{k}$ \colorbox{green!20}{and \{$g_k^1,g_k^2,...,g_k^C$\}} to server.
        \ENDON
    \ENDFOR
    \end{algorithmic}
\end{algorithm}


\begin{theorem}
    \label{thm:inter}
     Let the global optimal representation for class $c$ be $g^*_c=[a_{c,1}^*,...,a_{c,d}^*]$, and $z^{c,t}_k$ be the representation of sample $x$ in the class $c$ of the $k$-th client. Assuming that $\forall i$, both $|a^*_{c,i}|$ and $z^{c,t}_{k,i}$ are upper bounded by $G$, and all dimensions are disentangled, in round t, the i-th dimension of local representation $z^{c,t}_{k}$ satisfies
     \begin{equation} 
     \small
    |z_{k,i}^{c,t}-a_{c,i}^*|\leq 2(1-\hat{p}_k^c\Gamma)G+\delta \Gamma, \nonumber
    \end{equation}
    where $\hat{p}_k^c$ is the accumulation regarding the $i$-th dimension of the class-$c$ prototype, $\Gamma=\frac{1-(p_k^c)^t}{1-p_k^c}$, $(p_k^c)^t$ refers to the $p_k^c$ raised to the power of t, and $\delta$ is the maximum margin of the inter-loss term. 
\end{theorem}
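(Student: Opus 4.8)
The idea is to read the prototype dynamics as a one-step contraction for the per-coordinate error $e^t_i:=|z^{c,t}_{k,i}-a^*_{c,i}|$ and then unroll it over the $t$ rounds. The disentanglement assumption lets me argue coordinate by coordinate, so everything collapses to a scalar recursion; the intra-class loss and Lemma~\ref{lema:intra} enter only in that they are what make this decoupling legitimate, and the uniform bound $G$ on $|a^*_{c,i}|$ and $|z^{c,t}_{k,i}|$ supplies the trivial initialization $e^0_i\le 2G$.

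First I would use the inter-class loss~\eqref{eq:inter} to show that after one round of local training the representation lies within the residual margin of its own current global prototype, i.e.\ $|z^{c,t}_{k,i}-g^t_{c,i}|\le\delta$, where $g^t_c$ is the prototype broadcast in round $t$. This is exactly where ``$\delta$ is the maximum margin of the inter-loss term'' is used: because $D_{c_i,c_j}$ hinges on $\|z^{c_i}_{k,n}-g^t_{c_i}\|-\|z^{c_i}_{k,n}-g^t_{c_j}\|$, driving~\eqref{eq:inter} down to its irreducible level $\delta$ keeps every class-$c$ sample within distance $\delta$ of $g^t_c$, hence within $\delta$ in each coordinate.

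Next I would expand the server aggregation $g^t_{c,i}=\sum_{k'}p^c_{k'}g^{c,t-1}_{k',i}$ (identifying a client's local prototype with the sample representation it produces) and split the unit mass into three pieces: the self-weight $p^c_k$, contributing $p^c_k z^{c,t-1}_{k,i}$; the accumulated weight $\hat p^c_k$ of the clients whose $i$-th coordinate already matches the optimum $a^*_{c,i}$; and a residual weight $1-p^c_k-\hat p^c_k$ whose contribution, together with the corresponding slice of $a^*_{c,i}$, has absolute value at most $2(1-p^c_k-\hat p^c_k)G$. Writing $g^t_{c,i}-a^*_{c,i}=p^c_k(z^{c,t-1}_{k,i}-a^*_{c,i})-(1-p^c_k-\hat p^c_k)a^*_{c,i}+(\text{residual})$, the triangle inequality together with the previous step gives the one-step estimate
$$e^t_i\;\le\;p^c_k\,e^{t-1}_i+2(1-p^c_k-\hat p^c_k)G+\delta .$$
Unrolling from $e^0_i\le 2G$ gives $e^t_i\le 2G(p^c_k)^t+\big(2(1-p^c_k-\hat p^c_k)G+\delta\big)\Gamma$ with $\Gamma=\sum_{s=0}^{t-1}(p^c_k)^s=\frac{1-(p^c_k)^t}{1-p^c_k}$; applying $(1-p^c_k)\Gamma=1-(p^c_k)^t$ to merge $2G(p^c_k)^t$ and $2(1-p^c_k)G\Gamma$ into a single $2G$ leaves exactly $2(1-\hat p^c_k\Gamma)G+\delta\Gamma$, as claimed.

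The delicate points are the first two steps, not the final unrolling. Turning ``the inter-class loss is minimized up to margin $\delta$'' into the clean coordinatewise bound $|z^{c,t}_{k,i}-g^t_{c,i}|\le\delta$ needs the disentanglement hypothesis and some care about which norm the hinge actually controls; and the three-way split of the aggregation mass -- in effect, the definition of the ``accumulation'' $\hat p^c_k$ together with the claim that the leftover clients perturb the $i$-th coordinate by at most $2(1-p^c_k-\hat p^c_k)G$ -- must be stated precisely for the residual bound to hold. Once those are fixed, the geometric-series bookkeeping is routine.
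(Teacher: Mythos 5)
Your proposal is correct and follows essentially the same route as the paper's proof: both decompose the prototype aggregation for the $i$-th coordinate into the supporting mass $\hat p_k^c$ (contributing $a_{c,i}^*$), the self-weight $p_k^c$, and a residual bounded by $G$, absorb the margin into a slack term of size $\delta$, and unroll the resulting scalar recursion with ratio $p_k^c$ into the geometric factor $\Gamma$. The only cosmetic difference is that you run the recursion directly on the error $e_i^t$ while the paper tracks the coordinate value with an explicit slack variable $\xi^t$ and bounds the error at the end; the algebra and the final bound coincide.
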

     
Note that, Theorem~\ref{thm:inter} shows five critical points: 1) The proof of the theorem requires each dimension of the representation to be irrelevant to each other, which is achieved by the intra-class loss. Although disentanglement of dimensions might not be totally achieved in practical, empirically, we find that the intra-class loss converges and maintains a relatively low value easily, meaning that the model achieves good decorrelation. In the Appendix, we show the training curve of intra-class loss during the federated training. 2) In the theorem, $\delta$ is a trade-off of training stability and theoretical results determined by the margin. The larger the margin, the larger the $\delta$, however the more stable the local training is. This is because the global prototypes are not very accurate in the early stage of local training and directly minimizing the distance of samples to their global class prototypes can bring side effect to the feature diversity. When the margin is removed~($D_{c_i,c_j}=\frac{1}{N_k^c}\sum_{n=1}^{N_k^c}||z_{k,n}^{c_i}-g_{c_i}^t||$), $\delta$ will be zero. 3) Without considering $\delta$, as $t$ increases, $\Gamma$ is smaller and representation $z_k^c$ is closer to global optimal prototype $g_c^*$, showing the promise of our method. 4) When t is large enough, we can get an upper bound $2\frac{1-p_k^c-\hat{p}_k^c}{1-p_k^c}G$, meaning more clients with the specific dimensional information participating in the training, the tighter the upper bound is. When all other clients can provide the support~($\hat{p}_k^c=1-p_k^c$), the error will be 0. 5) While $\hat{p}_k^c$ denotes the proportions of a subset of clients that can provide the support information for this dimension, the theoretical result depends on the class distribution and overlap across the clients. The complete proof is summarized in the Appendix~\ref{sec:proof_of_them1_appendix}.

\subsubsection{The Total Framework}\label{sec:total} 
After introducing the intra-class loss and the inter-class loss, we give the total framework of FedMR. On the client side, local models are trained on their partially class-disjoint datasets. Through manifold-reshaping loss, dimensions of the representation are decorrelated and local class subspace is corrected to prevent the space invasion, and gradually approach the global space partition. The total local objective including the vanilla classification loss can be written as
\begin{align}\label{eq:allloss}
\small
\begin{split}
\mathcal{L}_k= & \ell_k^{\text{cls}}+ \underbrace{\left(\mu_1\ell_k^{\text{intra}} + \mu_2\ell_k^{\text{inter}}\right)}_{\text{manifold reshaping}}, \\ 
\end{split}
\end{align}
where $\mu_1$ and $\mu_2$ are the balancing hyperparameters 
and will be discussed in the experimental part. In Figure~\ref{fig:structure}, we illustrate the corresponding structure of FedMR and formulate the training procedure in Algorithm~\ref{alg:fedavg}. In terms of the privacy concerns about the prototype transmission and the communication cost, we will give the comprehensive analysis on FedMR about these factors.

\begin{table*}[!t]
\renewcommand\arraystretch{0.75}
    \caption{\rm{Performance of FedMR and a range of state-of-the-art approaches on four datasets under PCDD partitions. Datasets are divided into $\varrho$ clients and each client has $\varsigma$ classes~(denoted as \emph{$P\varrho C\varsigma$}). We compute average accuracy of all partitions, highlight results of FedMR, underline results of best baseline, and show the improvement of FedMR to FedAvg~(subscript of FedMR) and to the best baseline~($\Delta$).}}
    \setlength{\tabcolsep}{2.5pt}
    \centering
    \scalebox{0.95}{
    \begin{tabular}{l|l|ccccccccc}
        \toprule[2pt]
        Datasets &  Split & FedAvg &FedProx & FedProc &FedNova & MOON & FedDyn& FedDC& \cellcolor{lightgray!30}{\textbf{FedMR}}& \textbf{$\Delta$}\\ \midrule 
        \multirow{6}{*}{FMNIST} & P5C2 & 67.29 & 69.60 & 66.28 & 66.87 & 66.82 & $\underline{71.01}$ & 68.50 & \cellcolor{lightgray!30}{$\textbf{75.51}_{8.22\%\uparrow}$}& \textbf{\color{red}{+4.50}}\\ 
        & P10C2 & 67.33 & 67.76 & $\underline{69.08}$ & 48.44 & 67.93& 67.16 & 67.36 &  \cellcolor{lightgray!30}$\textbf{74.97}_{7.64\%\uparrow}$&  \textbf{\color{red}{+5.89}}\\ 
        & P10C3 & 81.67 & 80.93 & 82.06 & 83.20 & $\underline{83.42}$& 83.00 & 83.24 & \cellcolor{lightgray!30}$\textbf{83.55}_{1.88\%\uparrow}$& \textbf{\color{red}{+0.13}} \\ 
        & P10C5 & 88.53 & 89.22 & $\underline{89.23}$ & 88.86 & 88.98& 88.38 & 89.22 & \cellcolor{lightgray!30}$\textbf{90.04}_{1.51\%\uparrow}$& \textbf{\color{red}{+0.81}} \\ 
        & IID & 91.93 & 91.95 & 92.06 & 91.84 & 92.12& 91.76 & $\underline{92.15}$ & \cellcolor{lightgray!30}$\textbf{92.19}_{0.26\%\uparrow}$& \textbf{\color{red}{+0.04}} \\  
        & avg &  79.35 & 79.89 & 79.74 & 75.84 &79.87& $\underline{80.26}$ & 80.09 & \cellcolor{lightgray!30}$\textbf{83.45}_{4.10\%\uparrow}$& \textbf{\color{red}{+3.19}} \\  
        \midrule
        \multirow{6}{*}{SVHN} & P5C2 & 81.85 & $\underline{81.83}$ & 79.54 & 81.11 & 81.60 & 79.89 & 81.63 & \cellcolor{lightgray!30}$\textbf{83.10}_{1.25\%\uparrow}$& \textbf{\color{red}{+1.27}} \\ 
        & P10C2 & 78.92 & 79.60 & 78.75 & 66.86 & $\underline{79.83}$ & 76.24 & 78.96 & \cellcolor{lightgray!30}$\textbf{82.47}_{3.55\%\uparrow}$& \textbf{\color{red}{+2.64}}\\ 
        & P10C3 & 87.70 & 87.40 & $\underline{88.13}$ & 87.50 & 87.83 & 87.27 & 88.05 & \cellcolor{lightgray!30}$\textbf{89.13}_{1.43\%\uparrow}$& \textbf{\color{red}{+1.00}}\\ 
        & P10C5 & 91.20 & 91.24 & 91.63 & $\underline{92.09}$ & 91.16 & 90.17 & 91.64 & \cellcolor{lightgray!30}$\textbf{92.18}_{0.98\%\uparrow}$& \textbf{\color{red}{+0.09}}\\ 
        & IID & 92.74 & 92.89 & $\underline{93.57}$ & 92.62 & 93.12 & 92.26 & 92.90 & \cellcolor{lightgray!30}$\textbf{93.04}_{0.30\%\uparrow}$& \textbf{\color{teal}{-0.53}}\\ 
        & avg &  86.48 & 86.59 & 86.32 & 83.87 & $\underline{86.71}$& 85.17 & 86.64 & \cellcolor{lightgray!30}$\textbf{87.98}_{1.50\%\uparrow}$& \textbf{\color{red}{+1.27}} \\  
        \midrule
        \multirow{6}{*}{CIFAR10} & P5C2 & 67.68 & 68.18 & 69.27 & 67.57 & 66.86 & $\underline{69.64}$ & 69.18 & \cellcolor{lightgray!30}$\textbf{74.19}_{6.51\%\uparrow}$& \textbf{\color{red}{+4.55}} \\ 
        & P10C2 & 67.27 & $\underline{71.09}$ & 67.02 & 57.79 & 67.61& 67.74 & 67.64 & \cellcolor{lightgray!30}$\textbf{73.32}_{2.23\%\uparrow}$& \textbf{\color{red}{+2.23}} \\ 
        & P10C3 & 77.82 & 77.89 & 77.87 & 77.22 & $\underline{78.42}$& 77.99 & 77.94 & \cellcolor{lightgray!30}$\textbf{82.75}_{4.93\%\uparrow}$& \textbf{\color{red}{+4.33}} \\ 
        & P10C5 & 88.22 & 88.34 & 88.19 & 88.20 & 88.00 & $\underline{88.35}$ & 88.14 & \cellcolor{lightgray!30}$\textbf{89.06}_{0.84\%\uparrow}$& \textbf{\color{red}{+0.71}}\\ 
        & IID & 91.88 & 92.14 & 92.62 & 92.37 & 92.56 & 92.29 & $\underline{92.85}$ & \cellcolor{lightgray!30}$\textbf{93.06}_{1.18\%\uparrow}$& \textbf{\color{red}{+0.21}}\\ 
         & avg &  78.58 & $\underline{79.53}$ & 78.99 & 76.63 & 78.69& 79.20 & 79.15 & \cellcolor{lightgray!30}$\textbf{82.48}_{3.90\%\uparrow}$& \textbf{\color{red}{+2.95}} \\  
         \midrule
 \multirow{6}{*}{CIFAR100} & P10C10 & 54.31 & 54.79 & 54.69 & 54.45 & 54.98 & $\underline{55.94}$ & 54.73 & \cellcolor{lightgray!40}$\textbf{57.27}_{2.96\%\uparrow}$& \textbf{\color{red}{+1.33}} \\ 
        & P10C20 & 64.81 & 65.37 & 64.98 & $\underline{65.79}$ & 65.75 & 65.02 & 65.21 & \cellcolor{lightgray!30}$\textbf{65.81}_{1.00\%\uparrow}$& \textbf{\color{red}{+0.02}}\\ 
        & P10C30 & 69.35 & 69.75 & 69.64 & 69.55 & 69.51& $\underline{69.84}$ & 69.38 & \cellcolor{lightgray!30}$\textbf{70.24}_{0.89\%\uparrow}$& \textbf{\color{red}{+0.40}} \\ 
        & P10C50 & 71.28 & 71.35 & $\underline{72.13}$ & 71.25 & 71.54& 71.25 & 72.11 & \cellcolor{lightgray!30}$\textbf{72.17}_{0.89\%\uparrow}$& \textbf{\color{red}{+0.04}} \\ 
        & IID & 72.28 & 72.55 & $\underline{73.07}$ & 72.66 &73.01 & 73.04 & 72.77 & \cellcolor{lightgray!30}$\textbf{72.79}_{0.51\%\uparrow}$& \textbf{\color{teal}{-0.28}}\\  
        & avg &  66.41 & 66.76 & 66.90 & 66.74 & 66.96& $\underline{67.22}$ & 66.84 & \cellcolor{lightgray!30}$\textbf{67.66}_{1.25\%\uparrow}$& \textbf{\color{red}{+0.44}} \\  
        \bottomrule[2pt]
    \end{tabular}}
    \label{tab:acc}
    \vspace{-0.3cm}
 \end{table*}

\section{Experiment}\label{sec:experiment}
\subsection{Experimental Setup}
\paragraph{Datasets.}\label{sec:imple} We adopt four popular benchmark datasets SVHN~(\cite{SVHN}), FMNIST~(\cite{FMNIST}), CIFAR10 and CIFAR100~(\cite{cifar10}) in federated learning and a real-world PCDD medical dataset ISIC2019~(\cite{isic1,isic2,isic3}) to conduct experiments. Regarding the data setup, although Dirichelet Distribution is popular to split data in FL, it usually generates diverse imbalance data coupled with occasionally PCDD. In order to better study pure PCDD, for the former four benchmarks, we split each dataset into $\varrho$ clients, each with $\varsigma$ categories, abbreviated as \emph{$P\varrho C\varsigma$}. For example, P10C10 in CIFAR100 means that we split CIFAR100 into 10 clients, each with 10 classes. Please refer to the detailed explanations and strategies in the Appendix.
ISIC2019 is a real-world federated application under the PCDD situation and the data distribution among clients is shown in the Appendix~\ref{sec:isic_appendix}. We follow the settings in Flamby benchmark~(\cite{terrail2022flamby}).
\paragraph{Implementation.}
We compare FedMR with FedAvg~(\cite{fedavg}) and multiple state-of-the-arts including FedProx~(\cite{fedprox}), FedProc~(\cite{scaffold}), FedNova~(\cite{moon}), MOON~(\cite{fednova}), FedDyn~(\cite{fedyn}) and FedDC~(\cite{feddc}). To make a fair and comprehensive comparison, we utilize the same model for all approaches and three model structures for different datasets: ResNet18~(\cite{resnet})~(follow~(\cite{moon,noniid2})) for SVHN, FMNIST and CIFAR10, wide ResNet~(\cite{wide}) for CIFAR100 and EfficientNet~(\cite{efficientnet}) for ISIC2019. The optimizer is SGD with a learning rate 0.01, the weight decay $10^{-5}$ and momentum 0.9. The batch size is set to 128 and the local updates are set to 10 epochs for all approaches. The detailed information of the model and training parameters are given in the Appendix. 

\subsection{Performance under PCDD}
In this part, we compare FedMR with FedAvg and other methods on FMNIST, SVHN, CIFAR10 and CIFAR100 datasets under partially class-disjoint situation. Note that, in FedProx, MOON, FedDyn, FedProc, FedDC and our method, there are parameters that need to set. We use grid search to choose the best parameters for each method. See more concrete settings in the Appendix~\ref{sec:params_appendix} and Section~\ref{sec:imple}.

As shown in Table~\ref{tab:acc}, with the decreasing class number in local clients, the performance of FedAvg and all other methods greatly drops. However, comparing with all approaches, our method FedMR achieves far better improvement to FedAvg, especially $7.64\%$ improvement \textit{vs.} $1.75\%$ of FedProc  for FEMNIST~(P10C2) and $6.51\%$ improvement \textit{vs.} $1.96\%$ of FedDyn for CIFAR10~(P5C2). Besides, FedMR also performs better under less PCDD, and on average of all partitions listed in the table, our method outperforms the best baseline by $3.19\%$ on FMNIST and $2.95\%$ on CIFAR10. 

\begin{table*}
\renewcommand\arraystretch{0.8}
    \caption{\rm{Global test accuracy of methods on  CIFAR10 and CIFAR100 under larger scale of clients~(P10, P50 and P100) and ISIC2019. \emph{$P\varrho C\varsigma$} denotes that the dataset is divided into $\varrho$ clients and each client has $\varsigma$ classes of samples. We highlight results of FedMR, underline results of best baseline, and show the improvement of FedMR to FedAvg~(bottom right corner of results) and to the best baseline~($\Delta$).}}
    \centering
    \scalebox{0.95}{
    \setlength{\tabcolsep}{2.5pt}
    \begin{tabular}{l|l|ccccccccc}
        \toprule[2pt]
        Datasets &  Split & FedAvg &FedProx & FedProc &FedNova & MOON & FedDyn& FedDC& \cellcolor{lightgray!30}{\textbf{FedMR}}& \textbf{$\Delta$}\\ \midrule 
        \multirow{3}{*}{CIFAR10} & P10C3 & 77.82 & 77.89 & 77.87 & 77.22 & \underline{78.42} & 77.99 & 77.94 & \cellcolor{lightgray!30}$\textbf{82.75}_{4.93\%\uparrow}$ & \textbf{\color{red}{+4.33}}\\ 
        & P50C3 & 75.46 & \underline{77.46} & 76.27 & 74.12 & 76.51 & 75.52 & 76.03 & \cellcolor{lightgray!30}$\textbf{79.58}_{4.12\%\uparrow}$& \textbf{\color{red}{+2.12}}\\ 
        & P100C3 & 71.65 & 72.37 & 72.44 & 70.46 & 72.46 & 71.85 & \underline{73.95} & \cellcolor{lightgray!30}$\textbf{76.93}_{5.28\%\uparrow}$& \textbf{\color{red}{+2.98}}\\ \midrule
        \multirow{3}{*}{CIFAR100} & P10C10 & 54.31 & 54.79 & 54.69 & 54.45 & 54.98 & \underline{55.94} & 54.73 & \cellcolor{lightgray!30}$\textbf{57.27}_{2.96\%\uparrow}$& \textbf{\color{red}{+1.33}} \\ 
        & P50C10 & 49.84 & 51.17 & 51.94 & 50.22 & \underline{52.19} & 50.53 & 51.17 & \cellcolor{lightgray!30}$\textbf{53.36}_{3.52\%\uparrow}$& \textbf{\color{red}{+1.17}}\\ 
        & P100C10 & 47.90 & 48.26 & 49.01 & 48.07 & 48.94 & \underline{49.24} & 48.76 & \cellcolor{lightgray!30}$\textbf{49.60}_{1.70\%\uparrow}$& \textbf{\color{red}{+0.36}}\\ 
        \midrule
        {ISIC2019} & Real & 73.14 & 75.41 & 75.26 & 73.62 & \underline{75.46} & 75.07 & 75.25 & \cellcolor{lightgray!30}$\textbf{76.55}_{3.41\%\uparrow}$& \textbf{\color{red}{+1.09}} \\ 
        \bottomrule[2pt]
    \end{tabular}
    }
    \label{tab:larger}
    \vspace{-0.3cm}
\end{table*}

\subsection{Scalability and Robustness}
In the previous section, we validate the FedMR under 5 or 10 local clients under partially class-disjoint data situations. In order to make a comprehensive comparison, we increase the client numbers of CIFAR10 and CIFAR100 and in each round, only 10 of clients participate in the federated procedures. Besides, we also add one real federated application ISIC2019 with multiple statistical heterogeneity problems including PCDD. The exact parameters and communication rounds of all methods can be found in the Appendix~\ref{sec:params_appendix}.

\begin{table}
\caption{\rm{The communication cost of approaches with prototypes~(\textit{i.e.,} FedProc and FedMR) and without prototypes~(\textit{i.e.,} FedAvg, FedProx, FedNova, MOON, FedDyn and FedDC).}}
\renewcommand\arraystretch{0.85}
    \centering
    \scalebox{0.95}{
    \begin{tabular}{l|ccccc}
        \toprule[2pt]
        Method Type & FMNIST & SVHN& CIFAR10&CIFAR100&ISIC2019\\ \midrule 
        w/o Prototypes & $11.182M$ & $11.184M$ & $11.184M$& $36.565M$ & $4.875M$\\ 
        w/ Prototypes & $11.187M$ & $11.189M$ & $11.189M$& $36.629M$ & $4.880M$ \\ 
        \midrule
        \rowcolor{lightgray!40}\color{teal}{Additional cost} & $\color{teal}{0.044\%\uparrow}$ & $\color{teal}{0.044\%\uparrow}$ & $\color{teal}{0.044\%\uparrow}$& $\color{teal}{0.175\%\uparrow}$ & $\color{teal}{0.102\%\uparrow}$ \\ 
        \bottomrule[2pt]
    \end{tabular}
    }
    \label{tab:param}
    \vspace{-15pt}
\end{table}

\begin{table}[!t]
\centering
\renewcommand\arraystretch{0.8}
\caption{\rm{Number of communication rounds and the speedup of communication when reaching the best accuracy of FedAvg in federated learning on CIFAR100 under three partition strategies. }}
    \scalebox{0.95}{
    \begin{tabular}{l|cccccccccc}
        \toprule[2pt]
        \multirow{1}{*}{Method}& 
        \multicolumn{2}{c}{P10C10}&\multicolumn{2}{c}{P50C10}& 
        \multicolumn{2}{c}{P100C10}
        \\
        \cmidrule(r){2-3}\cmidrule(r){4-5}\cmidrule(r){6-7}
        \footnotesize{(CIFAR100)}&  Commu. & Speedup&  Commu. & Speedup &Commu. & Speedup\\
        \midrule
        FedAvg&$400$ &$1\times$&$400$ &$1\times$ &$400$  &$1\times$\\
        FedProx&$352$ &$1.14\times$&$370$  &$1.08\times$&$384$&$1.04\times$\\
        FedProc&$357$ &$1.12\times$&$381$       &$1.05\times$&$389$&$1.03\times$\\
        FedNova&$290$ &$1.38\times$&$385$       &$1.04\times$&$\underline{382}$&$\underline{1.05}\times$\\
        MOON&$332$ &$1.20\times$&$373$  &$1.07\times$&$395$&$1.01\times$\\
        FedDyn&$\underline{178}$ &$\underline{2.25}\times$&$\underline{366}$  &$\underline{1.09}\times$&$384$&$1.04\times$\\
        FedDC&$275$ &$1.45\times$&$393$  &$1.02\times$&$387$&$1.03\times$\\
        \midrule
        \rowcolor{lightgray!40}\textbf{FedMR}&$\textbf{149}$ &$\textbf{2.68}\times$&$\textbf{293}$  &$\textbf{1.37}\times$&$\textbf{368}$&$\textbf{1.09}\times$\\
        \bottomrule[2pt]
    \end{tabular}
    }
    \label{tab:commu}
    \vspace{-15pt}
\end{table}

In Table~\ref{tab:larger}, we divide CIFAR10 and CIFAR100 into 10, 50 and 100 clients and keep the PCDD degree in the same level. As can be seen, with the number of client increasing, the performance of all methods drops greatly. No matter in the situations of fewer or more clients, our method achieves better performance and outperforms best baseline by $2.98\%$ in CIFAR-10 and $0.95\%$ in CIFAR-100 on average. 
Besides, in Table~\ref{tab:larger}, we verify FedMR with other methods under a real federated applications: ISIC2019. As shown in the last line of Table~\ref{tab:larger}, our method achieves the best improvement of $3.41\%$ relative to FedAvg and of $1.09\%$ relative to best baseline MOON, which means our method is robust in complicated situations more than PCDD\footnote{Note that, in Appendix, we provide two real-world datasets to further demonstrate the efficiency of FedMR compared with baselines.}.

\begin{figure}[t!]
\centering  
\subfigure[Local burden.]{
\label{fig:local_burden}
\includegraphics[width=0.37\textwidth]{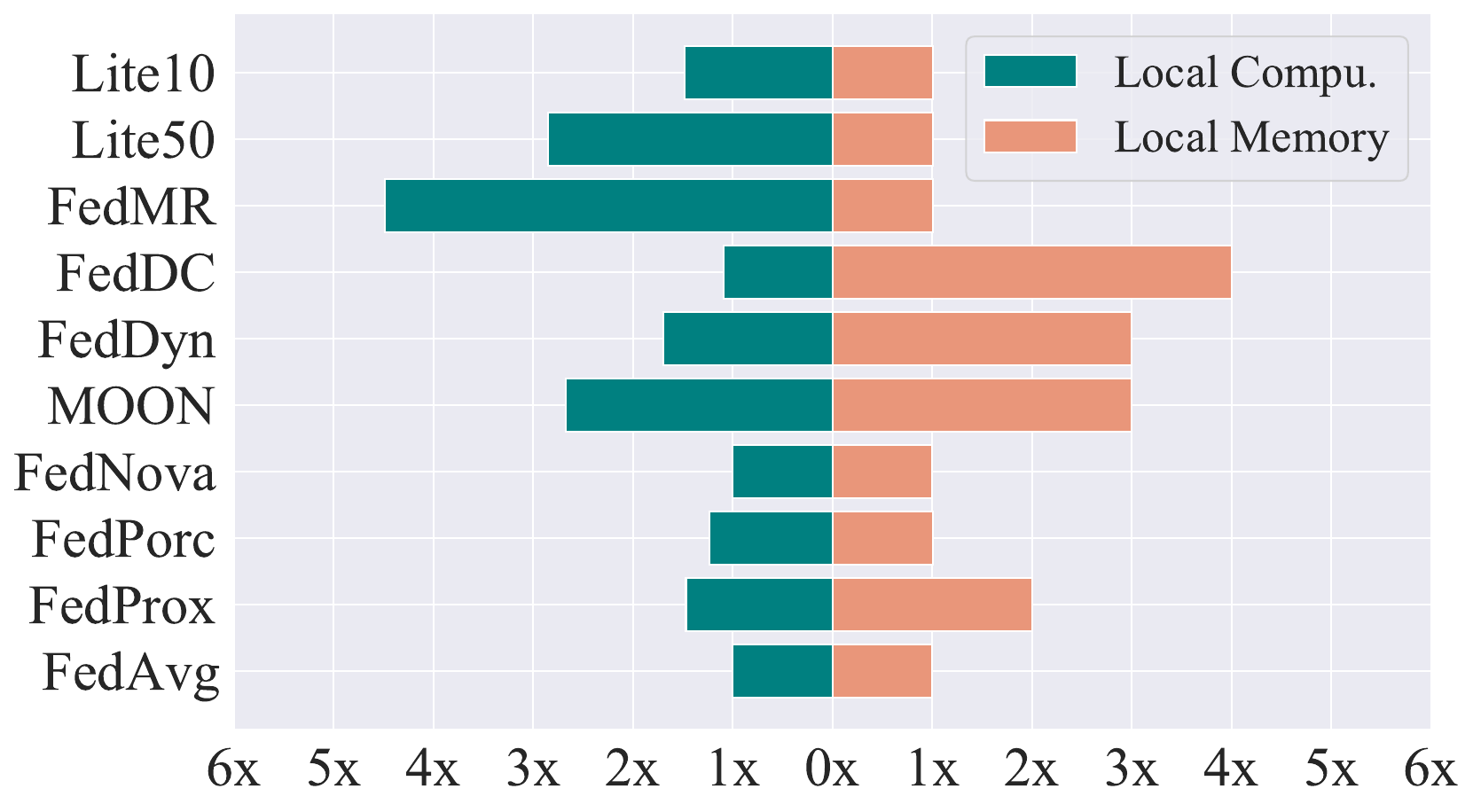}}
\subfigure[Computation times with global accuracy.]{
\label{fig:lite}
\includegraphics[width=0.34\textwidth]{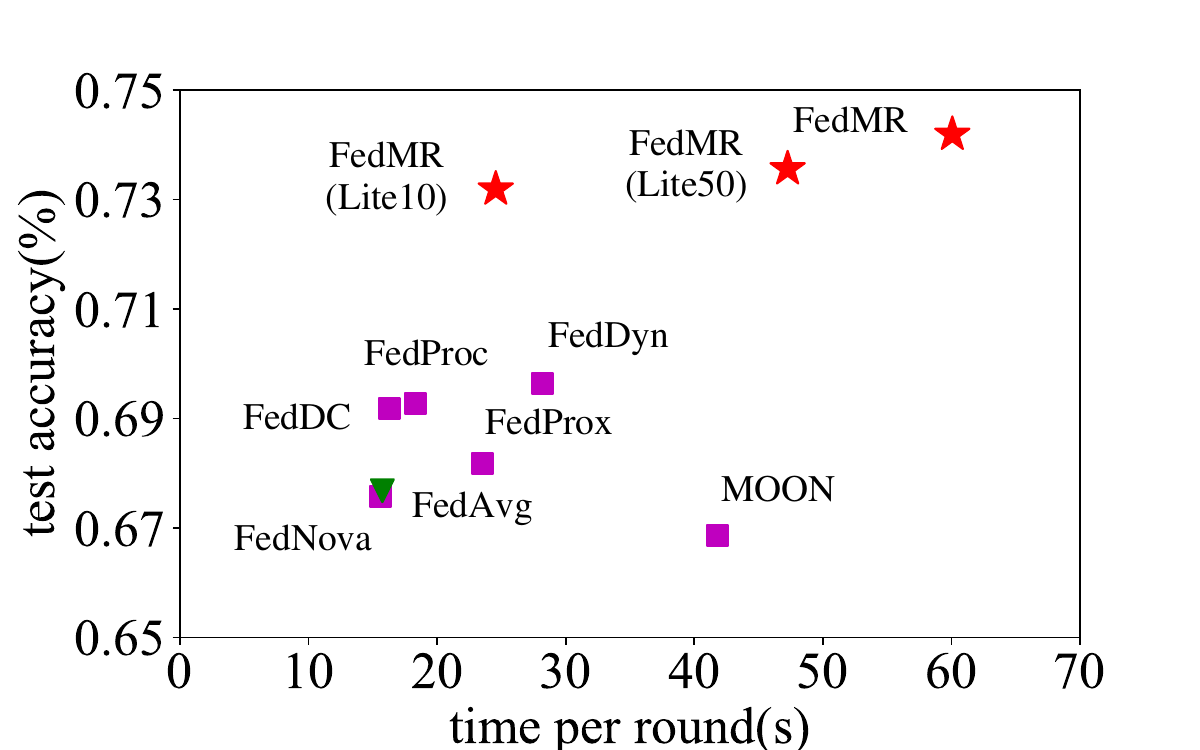}}
\vspace{-0.4cm}
\caption{The average memory consuming, computation time of local training and performance on all datasets of all baselines, FedMR and its light versions~(Lite~10 and Lite~50) for accelerating.}
\vspace{-0.4cm}
\end{figure}



\subsection{Further Analysis}\label{sec:concerns}
Except performance under PCDD, we here discuss the communication cost between clients and server, local burden of clients, and privacy, and conduct the ablation study.

\paragraph{Communication Concern.}\label{sec:commu}
In terms of communication cost, our method needs to share the additional prototypes. To show how much extra communication cost will be incurred, in Table~\ref{tab:param}, we show the number of transmission parameters in each round to compare the communication cost of methods with prototypes~(FedProc and FedMR) and without prototypes~(FedAvg, FedProx, FedNova, MOON, FedDyn and FedDC). From the results, the additional communication cost in single round is negligible. Except for sharing class prototypes averaged from class representations, we also tried to use 1-hot vectors to save computation and communication, but the performance is unsatisfactory and even worse than the FedAvg. This is because such discriminative structure might not fit the optimal statistics as each class has different hardness to learn, and it is not clear that arriving at such an optimization stationary from the given initialization can be a good choice. In Table~\ref{tab:commu}, we also provide the communication rounds on CIFAR100 under three different partitions, where all methods require to reach the best accuracy of FedAvg within 400 rounds. From the table~\ref{tab:commu}, we can see that FedMR uses less communication rounds (best speedup) to reach the given accuracy, indicating that FedMR is a communication-efficient approach.


\paragraph{Local Burden Concern.}\label{sec:burden}
In real-world federated applications, local clients might be mobile phones or other small devices. Thus, the burden of local training can be the bottleneck for clients. In Figure~\ref{fig:local_burden}, we compute the number of parameters that needs to be saved in local clients and the average local computation time per round. As can be seen, FedDC, FedDyn and MOON require triple or even quadruple storing memory than FedAvg, while FedProc and FedMR only need little space to additionally store prototypes. In terms of local computation time, FedMR requires more time to carefully reshape the feature space. To handle some computing-restricted clients, we provide light versions of FedMR, namely Lite~10 and Lite~50, where local clients randomly select only 10 or 50 samples to compute inter-class loss. From Figure~\ref{fig:local_burden}, the training time of Lite~10 and Lite~50 decreases sharply, while their performance is still competitive and better than other baselines, as shown in Figure~\ref{fig:lite}. Please refer to Appendix~\ref{sec:lite_version_appendix} for more details.
\begin{table}
\vspace{-0.4cm}
\centering
\renewcommand\arraystretch{0.8}
\caption{\rm{Performance of FedMR on CIFAR10 and CIFAR100 when only 50\%, 80\% and 100\% of clients are allowed to submit their class prototypes under different partitions.}}
    \begin{tabular}{l|l|cccc}
        \toprule[2pt]
        Datasets &  Split & FedAvg & 50\% &80\% & 100\%\\ \midrule 
        \multirow{3}{*}{CIFAR10} & P10C3 & 77.82 & 80.42 & 80.27 & 82.75\\ 
        & P50C3 & 75.46 & 79.14 & 79.43 & 79.58\\ 
        & P100C3 & 71.65 & 76.73& 76.87 & 76.93\\ \midrule
        \multirow{3}{*}{CIFAR100} & P10C10 & 54.31 & 56.41 & 56.73 & 57.27 \\ 
        & P50C10 & 50.22 & 51.06 & 52.57 & 53.36\\ 
        & P100C10 & 47.90 & 48.80 & 48.88 & 49.60\\ 
        \bottomrule[2pt]
    \end{tabular}
    \label{tab:private}
    \vspace{-0.4cm}
\end{table}
\begin{table}[t!]
\vspace{-0.2cm}
\caption{\rm{The ablation study of FedMR. We illustrate average accuracy of FedMR on the four datasets without the inter-class loss or the intra-class loss or both. Results of all partitions are shown in Appendix~\ref{sec:ablation_appendix}.}}
    
    \centering
    \renewcommand\arraystretch{0.75}
    \begin{tabular}{cc|cccc}
        \toprule[2pt]
        Inter & Intra & FMNIST & SVHN & CIFAR10 & CIFAR100 \\ \midrule
        - & - & 79.35 & 86.48 & 78.58 & 66.41  \\ \midrule
        \checkmark & - & 80.18 & 87.46 & 78.86 & 66.62 \\
        - & \checkmark & 81.03 & 87.15 & 80.89 & 67.11\\ \midrule
        \rowcolor{lightgray!40}\checkmark & \checkmark & \textbf{83.45} & \textbf{87.98} & \textbf{82.48} & \textbf{67.66} \\
        \bottomrule[2pt]
    \end{tabular}
    \label{tab:ablation}
    \vspace{-0.3cm}
\end{table}

\paragraph{Privacy Concern.}\label{sec:private}
Although prototypes are
population-level statistics of features instead of raw features,
which are relatively safe~(\cite{fedproc,fedproto}), it might be still hard for some clients with extreme privacy limitations. To deal with this case, one possible compromise is allowing partial local clients not to submit prototypes. In Table~\ref{tab:private}, we verify this idea for FedMR on CIFAR10 and CIFAR100, where at the beginning, we only randomly pre-select 50\% and 80\% clients to require prototypes. From Table~\ref{tab:private}, even under the prototypes of 50\% clients, FedMR still performs better than FedAvg, showing the elastic potential of FedMR in the privacy-restricted scenarios. In the extreme case where all prototypes of clients are disallowed to be submitted, we can remove the inter-class that depends on prototypes from FedMR and use the vanilla federated learning with the intra-class loss. As shown in Table~\ref{tab:ablation}, it can achieve a promising improvement than that without the intra-class loss. Note that, we cannot counteract the privacy concerns of federated learning itself, and leave this in the future explorations.

\begin{table}
\caption{Variance of top 50 egienvalues of covariance matrices of all classes within a mini-batch on CIFAR10.}
    \centering
    \renewcommand\arraystretch{0.75}
    \scalebox{0.88}{
    \begin{tabular}{l|ccccc}
        \toprule[2pt]
        \textbf{Method} &  \textbf{P5C2} & \textbf{P10C2} &\textbf{P10C3} & \textbf{P10C5} &\textbf{IID}\\ \midrule
        FedAvg & 1201 & 1274 & 1055& 814 & 640\\ 
        FedProx & 931 & 977 & 874& 727 & 582\\ 
        FedProc & 1044 & 1074 & 955& 739 & 522\\ 
        FedNova & 1234 & 1277 & 977& 755 & 572\\ 
        MOON & 749 & 866 & 774& 744 & 579\\ 
        FedDyn & 854 & 906 & 844& 759 & 599\\ 
        FedDC & 1077 & 1104 & 784& 766 & 572\\ 
        FedMR~(intra)& 478 & 437 & 372 & 407 & 538 \\ 
        FedMR~(inter+intra)& 570 & 538 & 566 & 579 & 635 \\ 
        \bottomrule[2pt]
    \end{tabular}
    \label{tab:decorrelating}
    }
    \vspace{-15pt}
\end{table}
\paragraph{Decorrelating Analysis}
Here, we empirically verify the effectiveness of FedGELA compared with all methods on constraining feature spaces from the view of the variance of eigenvalues of the covariance matrix M. In Table~\ref{tab:decorrelating}, we show the variance of top-50 eigenvalues~(sort from the largest to the smallest) of the covariance matrix M within a mini-batch~(batchsize is 128) after training 100 rounds on CIFAR10, calculated as:$
\frac{1}{128}\sum_{i=1}^{50}{(\lambda_i-\frac{1}{50}\sum_{j=1}^{50}{\lambda_j})^2}.$ As can be seen, when the PCDD problem eased, the variance of FedAvg gradually drops, which means in more uniform data distribution, the variance should be relatively small. The slight and sharp decreasing variance of prior federated methods and our FedMR indicate that they indeed help but still suffer from the dimensional collapse problem caused by PCDD while FedMR successfully decorrelates the dimensions. However the variance under the intra-class loss is too small and far away from the values of FedAvg in the IID setting, meaning it may enlarge the risk of the space invasion. In order to prevent space invasion, our inter-class loss provide a margin for the feature space expansion. In the table, we could see that the variance of FedMR~(under the intra-class loss and the inter-class loss) is a little larger compared to FedMR~(the intra-class loss) and approaches to the FedAvg under the IID setting.

\paragraph{Ablation Study.}
FedMR introduces two interplaying losses, the intra-class loss and the inter-class loss, to vanilla FL. To verify the individual efficiency, we conduct an ablation experiment in Table~\ref{tab:ablation}. As can be seen, the intra-class loss generally plays a more important role in the performance improvement of FedMR, but their combination complements each other and thus performs best than any of the single loss, confirming our intuition to prevent the collapse and space invasion under PCDD jointly. Besides, as FedMR and FedProc need local clients additionally share class prototypes which might not fair for other baselines, in Table~\ref{tab:prototype}, we properly configure all baselines with prototypes on FMNIST to show the superiority of FedMR. 
\begin{table}
\caption{The average accuracy of all methods adopted in Table~\ref{tab:acc} with or without the aid of prototypes on FMNIST. Since FedProc is the prototype version of FedAvg, here we don't show them.}
    \centering
    \renewcommand\arraystretch{0.85}
    \scalebox{0.88}{
    \begin{tabular}{l|ccccccccc}
        \toprule[2pt]
        Method & FedProx& FedNova&MOON&FedDyn& FedDC&\cellcolor{lightgray!30}{FedMR}\\ \midrule 
        w/o Prototypes & $79.89$& $75.84$& $79.87$&$\underline{80.26}$ & $80.09$ & $\cellcolor{lightgray!30}\textbf{81.03}$\\ 
        w Prototypes & $80.29$ & $78.76$& $80.12$& $\underline{81.21}$ & $80.57$ & $\cellcolor{lightgray!30}\textbf{83.45}$\\ 
        \bottomrule[2pt]
    \end{tabular}
    }
    \label{tab:prototype}
    \vspace{-15pt}
\end{table}


\section{Conclusion}\label{sec:conclusion}
In this work, we study the problem of \emph{partially class-disjoint data} (PCDD) in federated learning, which is practical and challenging due to the unique collapse and invasion problems, and propose a novel approach called FedMR to address the dilemma of PCDD. Theoretically, we show how the proposed two interplaying losses in FedMR to prevent the collapse and guarantee the proper margin among classes. Extensive experiments show that FedMR achieves significant improvements on FedAvg under the PCDD situations and outperforms a range of state-of-the-art methods. 

\section*{Acknowledgements}
The work is supported by the National Key R$\&$D Program of China (No. 2022ZD0160702),  STCSM (No. 22511106101, No. 22511105700, No. 21DZ1100100), 111 plan (No. BP0719010) and National Natural Science Foundation of China (No. 62306178). Ziqing Fan and Ruipeng Zhang were partially supported by Wu Wen Jun Honorary Doctoral Scholarship, AI Institute, Shanghai Jiao Tong University.

\newpage
\clearpage
\appendix


\section{Implementation of Method}

\subsection{Proof of Theorem 1}\label{sec:proof_of_them1_appendix}


\begin{theorem}
    \label{thm:inter'}
     Let the global optimal representation for class $c$ denote $g^*_c=[a_{c,1}^*,...,a_{c,d}^*]$, and $z^{c,t}_k$ denote the representation of sample $x$ in the class $c$ of the $k$-th client. Assuming that $\forall i$, both $|a^*_{c,i}|$ and $|z^{c,t}_{k,i}|$ are upper bounded by $G$, and all dimensions are disentangled. Then, in round t, the i-th dimension of local representation $z^{c,t}_{k}$ satisfies
     \begin{equation} 
     \small
    |z_{k,i}^{c,t}-a_{c,i}^*|\leq 2(1-\hat{p}_k^c\Gamma)G+\delta \Gamma, \nonumber
    \end{equation}
    where $\hat{p}_k^c$ is the accumulation regarding the $i$-th dimension of the class-$c$ prototype, $\Gamma=\frac{1-(p_k^c)^t}{1-p_k^c}$ and $\delta$ is the maximum margin induced by the optimization of the inter-loss term.
\end{theorem}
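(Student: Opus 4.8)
The plan is to turn the bound into a single linear recursion over the communication rounds and then unroll it. Working coordinate by coordinate (legitimate because the dimensions are assumed disentangled), set $u_t := |z_{k,i}^{c,t} - a_{c,i}^*|$. Two ingredients feed the recursion. First, the inter-class term $D_{c_i,c_j}$ pulls every class-$c$ sample of client $k$ toward its own global prototype $g_c^t$; at a minimizer the competing-prototype term leaves exactly the slack $\delta$ (the ``maximum margin''), which vanishes once the margin is dropped, so after the round-$t$ local training one has $\|z_{k,n}^{c,t}-g_c^t\|\le\delta$ and hence $|z_{k,i}^{c,t}-(g_c^t)_i|\le\delta$ for each coordinate $i$. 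Second, the server prototype is the convex combination $(g_c^t)_i=\sum_{k'}p_{k'}^c (g_{k'}^c)_i$ with $\sum_{k'}p_{k'}^c=1$.

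Next I would split that convex combination into three groups. The client $k$ itself carries weight $p_k^c$, and its class-prototype coordinate is the average of its round-$(t-1)$ class-$c$ representations, which — by the intra-class concentration already enforced — lies within $u_{t-1}$ of $a_{c,i}^*$. The subset of clients that ``support'' coordinate $i$ of class $c$, of total weight $\hat{p}_k^c$, contribute the correct value $a_{c,i}^*$. The remaining clients, of total weight $1-\hat{p}_k^c-p_k^c$, are only constrained by the boundedness hypothesis, hence each is within $2G$ of $a_{c,i}^*$. Subtracting $a_{c,i}^*$, the supporting group cancels and the triangle inequality gives $|(g_c^t)_i-a_{c,i}^*|\le p_k^c u_{t-1}+2(1-\hat{p}_k^c-p_k^c)G$; combining with the $\delta$ bound from the inter-class step yields
\[
u_t\ \le\ p_k^c\,u_{t-1}\;+\;\delta\;+\;2\bigl(1-\hat{p}_k^c-p_k^c\bigr)G .
\]

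Finally I would unroll this recursion with the crude initial estimate $u_0\le 2G$ (both $|a_{c,i}^*|$ and $|z_{k,i}^{c,0}|$ are bounded by $G$), obtaining $u_t\le (p_k^c)^t\,2G+\bigl(\delta+2(1-\hat{p}_k^c-p_k^c)G\bigr)\sum_{j=0}^{t-1}(p_k^c)^j$. Since $\sum_{j=0}^{t-1}(p_k^c)^j=\frac{1-(p_k^c)^t}{1-p_k^c}=\Gamma$ and, as a short algebraic check, $(p_k^c)^t-p_k^c\Gamma=1-\Gamma$, the terms $2G(p_k^c)^t$ and $-2Gp_k^c\Gamma$ collapse to $2G(1-\Gamma)$, and the whole expression simplifies to $2(1-\hat{p}_k^c\Gamma)G+\delta\Gamma$, which is the claimed inequality.

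I expect the main obstacle to be making the second step precise: fixing the exact meaning of $\hat{p}_k^c$ as the aggregate weight of clients whose prototype coordinate is already at the optimum $a_{c,i}^*$, and being careful about the gap between a single-sample representation $z_k^{c,t}$ and the averaged prototype $g_k^c$ entering the aggregation. Both are only handled under the idealization that the inter-class loss is (approximately) minimized each round, which is also what legitimizes the per-round $\delta$ bound; by contrast, the geometric sum and the identity $(p_k^c)^t-p_k^c\Gamma=1-\Gamma$ are routine.
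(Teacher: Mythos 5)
Your proposal is correct and follows essentially the same route as the paper's proof: both decompose the aggregated prototype coordinate into the supporting clients (weight $\hat{p}_k^c$, contributing $a_{c,i}^*$), the client's own past value (weight $p_k^c$), and the remaining clients (bounded only by $G$), add the per-round slack $\delta$ from the margin, and unroll the resulting linear recursion into the geometric sum $\Gamma$. The only cosmetic difference is that you run the recursion on the error $u_t$ while the paper runs it on the raw coordinate value $r_t$ and subtracts $a_{c,i}^*$ at the end; the algebraic simplification to $2(1-\hat{p}_k^c\Gamma)G+\delta\Gamma$ is identical.
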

\begin{proof}
If the global optimum satisfies:
$$
f_1(x_c;w_{*,1})=g^*_c=[a_{c,1}^*,...,a_{c,d}^*],
$$
we could define the local optimum of class c in client k:
$$
    f_1(x_c;w_{k,1}^*)=z_k^c=[a_{c,1}^*,...,a_{c,s}^*,\sigma_{k,s+1}^c,...,\sigma_{k,s+r}^c],
$$
where $\{a_{c,1}^*,...,a_{c,s}^*\}$ denotes the dimensions relative to classifying class c with other seen classes in client k. Since the intra-class loss decorrelates each dimensions of the feature space, $\{\sigma_{k,s+1}^c,...,\sigma_{k,s+r}^c\}$ are free dimensions and irrelevant to the seen class. By minimizing the inter-class loss
$$
D_{c_i,c_j}=\frac{1}{N_k^c}\sum_{n=1}^{N_k^c}\max\{||z_{k,n}^{c_i}-g_{c_i}^t||-||z_{k,n}^{c_i}-g_{c_j}^t)||,0\}. \\
$$
Since all dimensions are irrelevant~(achieved by intra-class loss), we can reach the optimum of each dimension of feature representations in the limit, which has the following property:
$$
||(z_k^{c_i})_m-g_{c_i,m}^t||\leq \min\{||(z_k^{c_i})_m-g_{c_j,m}^t||\},~\forall{i}\neq j,
$$
where m denote one of the dimensions. In terms of the representation space of a certain class c, we have the dimensions $\{a_{c,1}^*,...,a_{c,s}^*\}$ relevant to the seen classes in the local client reach the optimal. In this case, we can rewrite the locality of $\{\sigma_{k,s+1}^c,...,\sigma_{k,s+r}^c\}$ by introducing a slack variable as follows:
$$
\sigma_{k,s+j}^{c,t+1}-g_{c,s+j}^t= \xi^t,
$$
where $\xi^t$ is the induced slack variable. The \textit{s+j} -th element of class prototypes of class c can be written as:
$$
g_{c,s+j}^t=\hat{p}_k^ca_{c,s+j}^*+p_k^c\sigma_{k,s+j}^t+\sum_{k'} p_{k'}^c\sigma_{k',s+j}^t,
$$
where $\hat{p}_k^c$ denotes the proportions of a subset of clients that can provide the support information of these dimensions. Note that, such support is related to class c to distinguish unseen classes in client k. $p_{k'}$ is the clients can not provide the support information except for client k~($\hat{p}_k^c+p_{k'}^c+p_k^c=1$). Putting all things together, we could have the following equation:
$$
\sigma_{k,s+j}^{c,t+1}=\hat{p}_k^ca_{c,s+j}^*+p_k^c\sigma_{k,s+j}^t+\sum_{k'} p_{k'}^c\sigma_{k',s+j}^{c,t}+\xi^t.
$$
Let $\sigma_{k,s+j}^{c,t+1}$ be $r_{t+1}$ for simplicity. We will have
$$
r_{t+1}=\hat{p}_k^c a_{k,s+j}^*+p_k^cr_t+\sum_{k'}p_{k'}^{c,t}+\xi^t.
$$
With the recursive iteration to $r_0$, we can compute that
$$
r_{t+1}=\frac{1-(p_k^c)^t}{1-p_k^c} \hat{p}_k^c a_{k,s+j}^*+(p_k^c)^t r_0 +A+B,
$$
where A is defined as
$$
A=\sum_{k'}p_{k'}^{c,t}\sigma_{k',s+j}^{c,t}+...+(p_k^c)^t\sum_{k'}p_{k'}^{c,0}\sigma_{k',s+j}^{c,0},
$$
and B is defined as
$$
B=\xi^t+p_k^c\xi^{t-1}+...+(p_k^c)^t\xi^0,
$$
If $|\xi^t|$ is bounded by $\delta$ and $\sigma_{k',s+j}^{c,t}$ is bounded by $G$ for all t, we have the inequality:
$$
|A|\leq \Gamma(1-\hat{p}_k^c-p_k^c)G,
$$
and
$$
|B|\leq \Gamma \delta,
$$
where $\Gamma$ is defined as:
$$
\Gamma=\frac{1-(p_k^c)^t}{1-p_k^c}.
$$
Therefore, we have the following bound
\begin{align*}
|r_{t+1}-a_{c,s+j}^*|
=&|\Gamma \hat{p}_k^c a_{k,s+j}^*-a_{c,s+j}^*+(p_k^c)^t r_0 +A+B| \\
\leq& (1-\hat{p}_k^c \Gamma)G+(p_k^c)^t G +|A|+|B| \\
\leq& 2(1-\hat{p}_k^c\Gamma)G+\Gamma \delta.
\end{align*}
In the above first inequality, we use $|a+b+c|\leq |a|+|b|+|c|$. And in the above second inequality, we combine the similar terms in A and B. As $z_{k,i}^{c,t}=g_{c,i}^*$ for i=1,2,...,s, we universally have
$$
|z_{k,i}^{c,t}-a_{c,i}^*|\leq 2(1-\hat{p}_k^c\Gamma)G+\delta \Gamma,
$$
which completes the proof.
\end{proof}
\subsection{Proof of Lemma 1}\label{sec:proof_of_lemma1_appendix}
\begin{lemma}
    \label{lema:intra'}
    Assuming a covariance matrix $M\in\mathbf{R}^{d\times d}$ computed from the feature of each sample with the standard normalization, and its eigenvalues $\{\lambda_1, \lambda_2,...,\lambda_d\}$, we will have the following equality that satisfied
    \begin{equation}
    \small
    \sum_{i=1}^{d}{(\lambda_i-\frac{1}{d}\sum_{j=1}^{d}{\lambda_j})^2}=||M||^2_F - d. \nonumber
    \end{equation}
\end{lemma}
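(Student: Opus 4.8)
The plan is to reduce the identity to two elementary facts about the spectrum of a real symmetric matrix together with one structural observation about what ``standard normalization'' forces on $M$.

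First I would pin down the trace. Because the feature of each sample is standardized coordinate-wise (each dimension is centered and divided by its own standard deviation before $M$ is formed), every diagonal entry $M_{ii}$ is the variance of a unit-variance quantity, hence $M_{ii}=1$ for all $i$. Therefore $\Tr(M)=d$, and since $\Tr(M)=\sum_{j=1}^{d}\lambda_j$, the mean eigenvalue is $\bar\lambda:=\frac{1}{d}\sum_{j=1}^{d}\lambda_j=1$.

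Next I would rewrite the left-hand side using the standard variance decomposition: for any reals $\lambda_1,\dots,\lambda_d$ with mean $\bar\lambda$,
$$\sum_{i=1}^{d}(\lambda_i-\bar\lambda)^2=\sum_{i=1}^{d}\lambda_i^2-d\,\bar\lambda^2.$$
Plugging in $\bar\lambda=1$ from the previous step gives $\sum_{i=1}^{d}(\lambda_i-\bar\lambda)^2=\sum_{i=1}^{d}\lambda_i^2-d$. Then I would identify $\sum_i\lambda_i^2$ with $\|M\|_F^2$: since a covariance matrix is real and symmetric, the spectral theorem gives $M=Q\Lambda Q^\top$ with $Q$ orthogonal and $\Lambda=\mathrm{diag}(\lambda_1,\dots,\lambda_d)$, so $\|M\|_F^2=\Tr(M^\top M)=\Tr(M^2)=\Tr(Q\Lambda^2 Q^\top)=\Tr(\Lambda^2)=\sum_{i=1}^{d}\lambda_i^2$. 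Combining the three observations yields $\sum_{i=1}^{d}(\lambda_i-\bar\lambda)^2=\|M\|_F^2-d$, which is the claim.

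There is no serious obstacle here: the only step that actually uses the hypothesis—and hence the only point needing care—is the first one, namely that standard normalization makes $M$ have unit diagonal and therefore $\Tr(M)=d$ (equivalently $\bar\lambda=1$); without this the identity would read $\|M\|_F^2 - \frac{1}{d}(\Tr M)^2$ on the right. Everything else is just the spectral theorem plus completing the square.
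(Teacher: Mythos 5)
Your proof is correct and follows essentially the same route as the paper's: both identify $\|M\|_F^2$ with $\sum_i\lambda_i^2$ via the eigendecomposition of the symmetric matrix $M$, and both use the fact that standard normalization forces $\frac{1}{d}\sum_j\lambda_j=1$ to reduce the left-hand side to $\sum_i\lambda_i^2-d$. If anything, you justify the trace condition slightly more explicitly (unit diagonal entries from coordinate-wise standardization) than the paper does, but the argument is the same.
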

\begin{proof}

On the right-hand side, we have
\begin{align}
||M||_F^2-d= \nonumber
&Tr((M)^TM-d) \\ 
=&Tr(U\sum U^TU\sum U^T)-d \nonumber\\
=&\sum_{i=1}^{d}\lambda_i^2-d. \nonumber
\end{align}
As we have applied the standard normalization on $M$, we have the the characteristic of eigenvalues $\frac{1}{d}\sum_{j=1}^d\lambda_j=1$. Then for the right-hand side, we can have the deduction
\begin{align}
\sum_{i=1}^d(\lambda_i-\frac{1}{d}\sum_{j=1}^d\lambda_j))^2= 
&\sum_{i=1}^d(\lambda_i-1)^2 \nonumber \\ 
=& \sum_{i=1}^d(\lambda_i^2-2\lambda_i + 1)\\ 
=&\sum_{i=1}^{d}\lambda_i^2-d. \nonumber
\end{align}
This constructs the equality of the left-hand side and the right-hand side, which completes the proof.
\end{proof}

\section{Simulation and Visualization}\label{sec:simulation_appendix}
To show the dimensional collapse and verify our method, we simulate data for four categories, and samples of each category are generated from a circle with a different center and a radius of 0.5. 
We set centers (1,1), (1,-1), (-1,1) and (-1,-1) for class 1 to 4, respectively. The input is the two-dimensional position, and we adopt a three-layer MLP model with hidden size 128 and 3. We visualize the outputs of the second layer~(3 dimensions) and project them onto an unit sphere. In the simulation, we adopt SGD with learning rate 0.1 and generate 5000 samples for each category, and the number of iterations and batch size are set to 50 and 128. In the paper, we train the model and show the visualization of models trained on four classes of samples, two classes of samples and two classes of samples with FedMR to show the global feature space, collapsed feature space and reshaped feature space, respectively.

\begin{table*}
    \caption{\rm{Best hyper-parameters tuned from 0.000001 to 1 of FedMR and a range of state-of-the-art approaches on four datasets under PCDD partitions. Datasets are divided into $\varrho$ clients and each client has $\varsigma$ classes~(denoted as \emph{$P\varrho C\varsigma$}).}}
    \centering
    \scalebox{1}{
    \begin{tabular}{l|l|ccccccc}
        \toprule[2pt]
        Datasets &  Split &FedProx & FedProc & MOON & FedDyn& FedDC& FedMR~($\mu_1$)& FedMR~($\mu_2$)\\ \midrule
        \multirow{5}{*}{FMNIST} & P5C2 & 0.01 & 0.00001 & 0.001 & 0.0001 & 0.001 & 0.01& 0.0001\\ 
        & P10C2 & 0.01 & 0.00001 & 0.0001 & 0.0001 & 0.001& 0.01& 0.0001 \\ 
        & P10C3 & 0.01 & 0.00001 & 0.001 & 0.0001 & 0.0001& 0.01& 0.0001 \\ 
        & P10C5 & 0.01 & 0.001 & 0.001& 0.0001 & 0.001& 0.01& 0.0001 \\ 
        & IID & 0.01 & 0.00001 & 0.01 & 0.0001 & 0.01& 0.01& 0.0001\\  
        \midrule
        \multirow{5}{*}{SVHN} & P5C2 & 0.001 & 0.0001 & 0.001 & 0.000001 & 0.0001 & 0.001& 0.0001 \\ 
        & P10C2 & 0.001 & 0.0001 & 0.001 & 0.000001 & 0.0001& 0.001& 0.0001\\ 
        & P10C3 & 0.001 & 0.0001 & 0.001 & 0.000001 & 0.0001 & 0.001& 0.0001 \\ 
        & P10C5 & 0.001 & 0.0001 & 0.1 & 0.000001 & 0.0001 & 0.001& 0.0001\\ 
        & IID & 0.001 &0.0001 &0.1 & 0.00001 & 0.0001 & 0.001& 0.0001\\ 
        \midrule
        \multirow{5}{*}{CIFAR10} & P5C2 & 0.01 & 0.001 & 0.01 & 0.0001 & 0.0001 & 0.1& 0.001\\ 
        & P10C2 & 0.01 & 0.0001 & 0.0001 &0.0001 & 0.0001& 0.1& 0.001\\ 
        & P10C3 & 0.01 & 0.001 & 0.0001 & 0.0001 & 0.0001& 0.1& 0.0001\\ 
        & P10C5 & 0.01 & 0.001 & 0.01 & 0.0001 & 0.00001& 0.1& 0.001\\ 
        & IID & 0.01 & 0.001 & 1 & 0.0001 & 0.001 & 0.1& 0.0001\\ 
         \midrule
 \multirow{5}{*}{CIFAR100} & P10C10 & 0.01 & 0.001 & 0.1 & 0.0001 & 0.0001 & 0.1&0.0001\\ 
        & P10C20 & 0.01 & 0.001 & 0.1 & 0.0001 & 0.0001 & 0.1& 0.0001\\ 
        & P10C30 & 0.01 & 0.001 & 0.1 & 0.0001 & 0.0001& 0.1& 0.0001\\ 
        & P10C50 & 0.01 & 0.001 & 0.1 & 0.0001 & 0.0001& 0.1& 0.0001 \\ 
        & IID & 0.01 & 0.001 & 0.1 & 0.0001 &0.0001 & 0.1& 0.00001\\  
        \bottomrule[2pt]
    \end{tabular}}
    \label{tab:param1}
 \end{table*}
 \begin{table*}
    \caption{\rm{Best method-specific hyper-parameters~(weights of proximal term in FedProx, contrastive loss in MOON and so forth)} tuned from 0.000001 to 1 of FedProx, MOON, FedMR  and so forth on CIFAR10 and CIFAR100 under larger scale and a real-world application: ISIC2019. Datasets are divided into $\varrho$ clients and each client has $\varsigma$ classes~(denoted as \emph{$P\varrho C\varsigma$}).}
    \centering
    \scalebox{1}{
    \begin{tabular}{l|l|ccccccc}
        \toprule[2pt]
        Datasets &  Split&FedProx & FedProc& MOON & FedDyn& FedDC& FedMR~($\mu_1$)& FedMR~($\mu_2$)\\ \midrule 
        \multirow{3}{*}{CIFAR10} & P10C3 & 0.01 & 0.001 & 0.0001 & 0.0001 &0.0001 & 0.1& 0.0001\\ 
        & P50C3 & 0.0001 & 0.0001 & 0.1 & 0.0001 & 0.00001 & 0.01& 0.0001\\ 
        & P100C3 & 0.0001 & 0.0001 & 0.1 & 0.0001 & 0.0001 & 0.01& 0.0001\\ \midrule
        \multirow{3}{*}{CIFAR100} & P10C10 & 0.01 & 0.001 & 0.1 & 0.0001 & 0.0001 & 0.1& 0.0001\\ 
        & P50C10 & 0.0001 & 0.001 & 0.1 & 0.00001 &00001 & 0.01& 0.0001\\ 
        & P100C10 & 0.0001 & 0.0001 & 0.1 & 0.00001 & 0.0001 & 0.01& 0.0001\\ 
        \midrule
        {ISIC2019} & Real & 0.001 & 0.0001 & 0.1 & 0.0001 & 0.0001 & 0.001& 0.001\\ 
        \bottomrule[2pt]
    \end{tabular}}
    \label{tab:param2}
\end{table*}

\begin{table*}
\caption{Computation times and performance on P5C2 of FMNIST, SVHN and CIFAR10 and P10C10 of CIFAR100.}
    \centering
    \begin{tabular}{l|c|c|c|c|c}
        \toprule[2pt]
        \textbf{Dataset} & \textbf{FedAvg} & \textbf{n=0}& \textbf{n=10}& \textbf{n=50}&  \textbf{n=128}\\ \midrule 
        FMNIST &67.29/15.52s & 71.21/17.78s & 76.51/21.43s& 75.67/31.56s & 75.51/54.07s\\ 
        SVHN & 81.85/22.15s & 82.74/24.11s & 83.51/28.31s& 83.48/57.25s & 83.10/88.20s \\ 
        CIFAR10 & 67.68/15.74s & 72.26/17.70s & 73.19/24.56s& 73.56/47.26s & 74.19/60.07s\\ 
        CIFAR100 & 54.31/35.12s & 55.38/42.14s & 55.57/52.18s & 56.35/105.34s & 57.27/195.12s\\ 
        \bottomrule[2pt]
    \end{tabular}
    \label{tab:light-version}
\end{table*}
\begin{table}
\caption{Performance of FedAvg, inter-class loss, intra-class loss and both inter-class and inter-class losses~(FedMR).}
    \centering
    \begin{tabular}{l|l|cccc}
        \toprule[2pt]
        \textbf{Datasets} &  \textbf{Split} & \textbf{FedAvg} &\textbf{Inter} & \textbf{Intra} &\textbf{FedMR}\\ \midrule \multirow{5}{*}{FMNIST} & P5C2 & 0.6729 & 0.6717 & 0.7121 & 0.7497\\ 
        & P10C2 & 0.6733 & 0.6962 & 0.7028 & 0.7552 \\ 
        & P10C3 & 0.8167 & 0.8241 & 0.8282 & 0.8324 \\ 
        & P10C5 & 0.8953 & 0.8959 & 0.8899 & 0.9004 \\ 
        & IID & 0.9193 & 0.9210 & 0.9185 & 0.9215 \\  \midrule
        \multirow{5}{*}{SVHN} & P5C2 & 0.8185 & 0.8257 & 0.8274 & 0.8310 \\ 
        & P10C2 & 0.7892 & 0.8138 & 0.8148 & 0.8247\\ 
        & P10C3 & 0.8770 & 0.8886 & 0.8807 & 0.8913\\ 
        & P10C5 & 0.9120 & 0.9164 & 0.9086 & 0.9209\\ 
        & IID & 0.9274 & 0.9283 & 0.9259 & 0.9304\\ \midrule
        \multirow{5}{*}{CIFAR10} & P5C2 & 0.6768 &0.6775& 0.7226 & 0.7419 \\ 
        & P10C2 & 0.6727 &0.6766& 0.7267 & 0.7332 \\ 
        & P10C3 & 0.7782 &0.7784& 0.8051 & 0.8275 \\ 
        & P10C5 & 0.8822 &0.8851& 0.8766 & 0.8906\\ 
        & IID & 0.9188&0.9253& 0.9135 & 0.9306\\ \midrule
\multirow{5}{*}{CIFAR100} & P10C10 & 0.5431& 0.5538 & 0.5615 & 0.5727 \\ 
        & P10C20 & 0.6481 & 0.6486 & 0.6512 & 0.6581\\ 
        & P10C30 & 0.6951 & 0.6905 & 0.7021 & 0.7024 \\ 
        & P10C50 & 0.7128& 0.7129 & 0.7150 & 0.7217 \\ 
        & IID & 0.7228 & 0.7252 & 0.7259 & 0.7279\\  \bottomrule[2pt]
    \end{tabular}
    \label{tab:ablation_all}
\end{table}

\section{Implementation of Experiment}
\subsection{Models and Datasets}
\subsubsection{Models}\label{sec:models_appendix}
For FMNIST, SVHN and CIFAR10, we adopt modified version of ResNet18 as previous studies~\
(\cite{moon,zhang1,ye1,edgecloud,zhou3,zhou4}). For CIFAR100, we adopt wide ResNet~(\href{https://github.com/meliketoy/wide-resnet.pytorch}{https://github.com/meliketoy/wide-resnet.pytorch}). For ISIC2019 dataset, we adopt EfficientNet b0~(\cite{efficientnet}) as the same of Flamby benchmark~(\cite{terrail2022flamby}).
\subsubsection{Partition Strategies}\label{sec:partition_appendix}
Dirichlet distribution~(Dir($\beta$) is \textit{not suitable} to split data for pure PCDD problems (some classes are partially missing). As an exemplar simulation in Figure~\ref{fig:distribution}, Dirichlet allocation usually generates diverse \textit{imbalance} data coupled with occasionally PCDD. However, we do not focus on the locally-imbalance of each client but the \textit{locally-balance} of each client from limited existing categories. This is the intuition difference from a Dirichlet allocation. Therefore, taking an example of CIFAR100~(P10C30), to simulate pure PCDD situation, we first divide all 100 classes to the client in order: 1-30 categories for client 1, 31-60 categories for client 2, 61-90 categories for client 3, and 91-100 categories for client 4. Then the remain clients that lacking categories~(less than 30 classes) random choose categories from 1 to 100. After slicing the categories, the samples of each class are equally divided into clients that have such class. Such partition strategy can ensure the difference of class distributions among clients, all categories are allocated and the sample numbers are roughly the same.

\begin{wrapfigure}{r}{0.4\linewidth}
\vspace{-30pt}
\begin{center}
\includegraphics[width=0.4\textwidth]{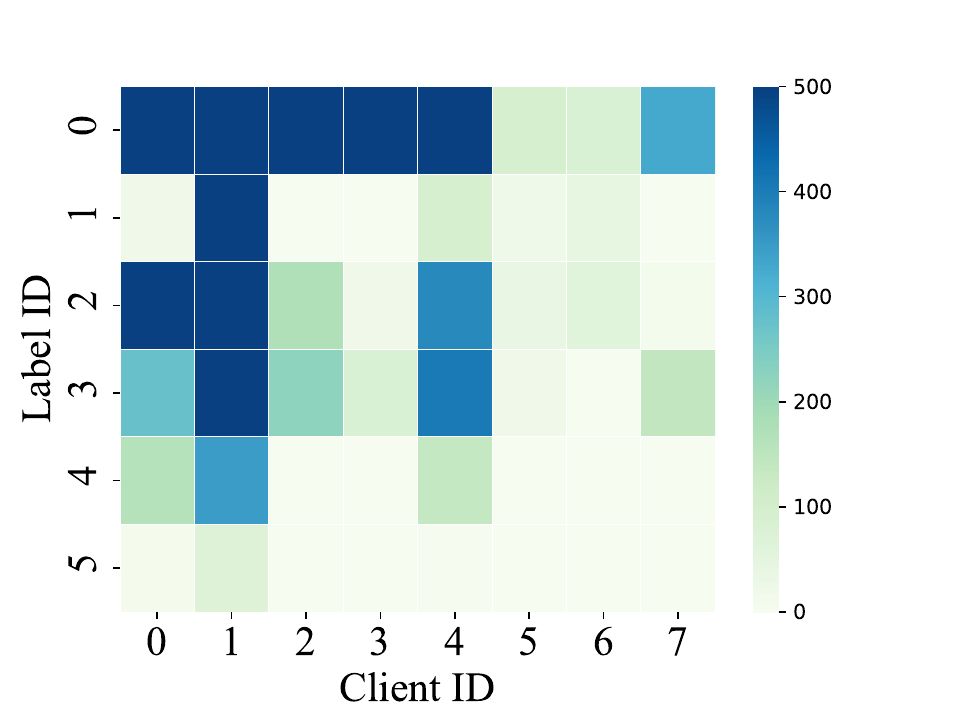}
\end{center}
\vspace{-10pt}
\caption{Heat map of the data distribution on ISIC2019 dataset.}
\label{fig:isic_dataset}
\vspace{-40pt}
\end{wrapfigure}

\subsubsection{ISIC2019 dataset}\label{sec:isic_appendix}
As shown in the Figure~\ref{fig:isic_dataset}, we show the heat map of data distribution of ISIC2019 dataset~(\cite{isic1,isic2,isic3}). As can be seen, there are multiple statistical heterogeneity problems including partially class-disjoint data~(PCDD).
\begin{figure*}
    \centering
    \subfigure[$\beta=1$.]{
    \centering
    \label{iid}
    \includegraphics[width=5.cm]{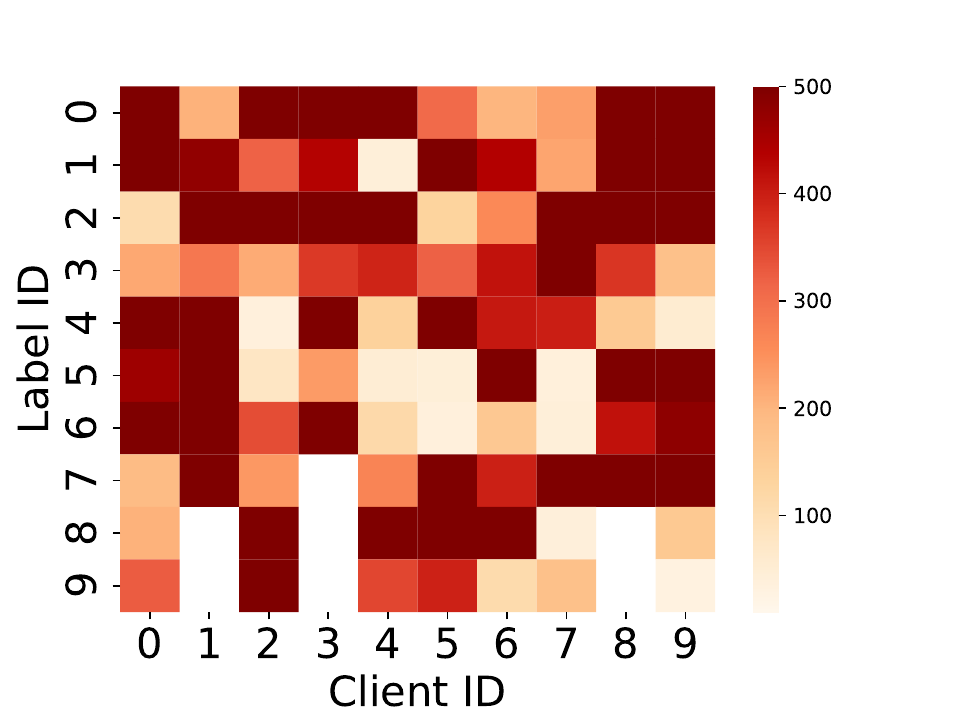}}
    \subfigure[$\beta=0.5$.]{
    \centering
    \label{beta1}
    \includegraphics[width=5.cm]{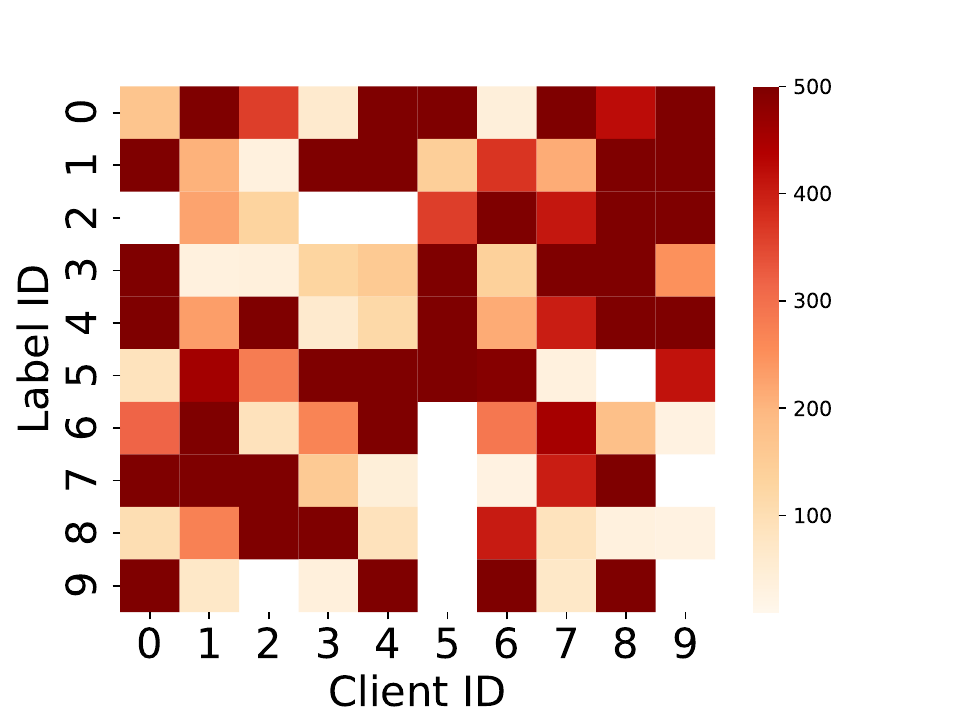}}
    \subfigure[$\beta=0.1$.]{
    \centering
    \label{beta05}
    \includegraphics[width=5.cm]{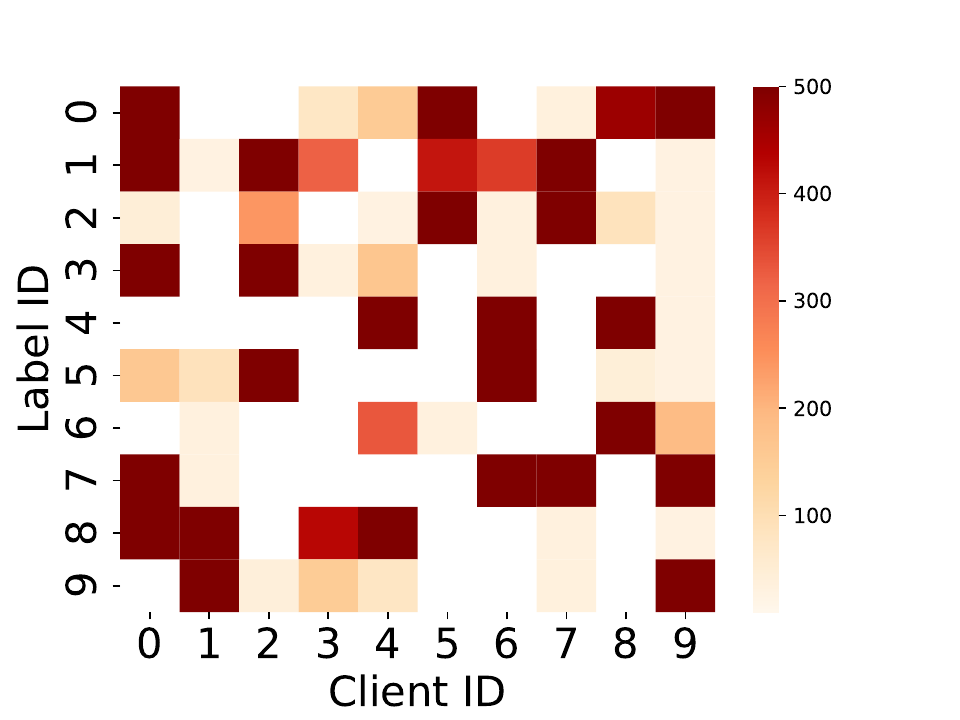}}
    \caption{Heatmaps of data distribution of CIFAR10 generated by Dir($\beta$).}\label{fig:distribution}
\end{figure*}

\subsection{Parameters}\label{sec:params_appendix}
As for model-common parameters like optimizer, lr and batchsize are all aligned. We have verified $lr=0.01$ is stable and almost the best for all methods in our settings. For method specific parameters like proximal term of FedProx, reshaping loss of FedMR, contrastive loss of MOON, dynamic regulazer of FedDyn and so forth are carefully searched and shown in the following.

\subsubsection{Parameters in 4.2}
We use grid search from 0.000001 to 1~(interval 10) to find best method-specific parameters of all method on different datasets as shown in Table~\ref{tab:param1}. The total rounds are 100 for FMNIST, SVHN and CIFAR10 and 200 for CIFAR100.
\subsubsection{Parameters in 4.3}
We use grid search from 0.000001 to 1~(interval 10) to find best method-specific parameters parameters of FedProx, MOON, FedMR and so forth on different datasets as shown in Table~\ref{tab:param2}. The total rounds are 100 for CIFAR10~(P10C3), 200 for CIFAR10~(P50C3) and 400 for CIFAR10~(P100C3) and CIFAR100.
\begin{table}
\centering
\renewcommand\arraystretch{1.0}
\caption{\rm{Performance of FedMR on two real-world datasets named HyperKvasir and ODIR under three PCDD situations.}}
    \begin{tabular}{l|l|ccccccc}
        \toprule[2pt]
        Datasets &  Split &FedAvg&	FedProx&	MOON&	FedDyn&	FedProc&	FedMR&	$\Delta$\\ \midrule 
        \multirow{3}{*}{HyperKvasir} & P10C2 & 69.67&	68.43&	69.33&	70.02&	69.54&	70.79&	+0.75\\ 
        & P10C3 & 76.44&	76.92&	77.43&	77.01&	76.72&	78.32&	+0.89\\ 
        & P10C5 &89.48&	89.23&	89.13&	89.34&	89.27&	89.54&	+0.06\\ \midrule
        \multirow{3}{*}{ODIR} & P3C3 &59.89&	60.24&	59.85&	60.78&	60.88&	61.79&	+0.91\\ 
        & P4C3 &56.53&	56.42&	56.89&	57.12&	55.78&	57.67&	+0.55\\ 
        & P5C5 & 54.51&	54.07&	54.11&	54.17&	53.73&	54.62&	+0.21\\ 
        \bottomrule[2pt]
    \end{tabular}
    \label{tab:morereal}
\end{table}

\begin{table*}
\caption{Results of FedMR on CIFAR10 when tuning $\mu_1$ of intra-class loss under the same $\mu_2$ of inter-class loss.}
    \centering
    \begin{tabular}{l|c|c|c|c}
        \toprule[2pt]
        \textbf{Split} & $\mu_1=1$ & $\mu_1=0.1$& $\mu_1=0.01$& $\mu_1=0.001$\\ \midrule 
        P5C2 &73.72 & 74.19 & 73.85& 73.19 \\ 
        P10C2 & 72.99&	73.32&	73.00&	73.12\\ 
        P10C3 &81.65	&82.75&	82.05&	80.64\\ 
        P10C5 &88.76&	89.06&	88.96&	88.64\\ 
        P10C10 &93.02	&93.06&	93.04&	92.16\\ 
        \bottomrule[2pt]
    \end{tabular}
    \label{tab:tune}
\end{table*}
			
\subsection{More results on real-world datasets}
We additionally test our FedMR on two more datasets, named HyperKvasir~(\cite{hyperkvasir}) and ODIR~(\cite{ODIR}) under three partitions. As shown in Table~\ref{tab:morereal}, our method achieves the best average improvement of 1.02 and 1.05 relative to FedAvg and of 0.57 and 0.56 relative to best baseline on the HyperKvasir and ODIR respectively.

\subsection{More about intra-class loss}

Since there are two additional loss in our method, it might raise heavy tuning problem. As shown in Table~\ref{tab:tune}, we record the results when tuning $\mu_1$ under the same $\mu_2$ on CIFAR10 and empirically observe that the performance is good and stable for $\mu_1$ from a large range, which means we only need to carefully tune $\mu_2$. What's more, we also demonstrate the value of intra-class loss to verify the effect of deccorelation during the federated training. As shown in the Figure~\ref{fig:intra}, we could see that intra-class loss converges and maintains a relatively low value easily, supporting our assumption in the Theorem~\ref{thm:inter} that the dimensions are decorrelated well by intra-class loss. 

\begin{table}[t!]
\centering
\caption{Performance on SVHN under partitions generated by dirichlet distribution.}
\label{tb:nonpdd}
\centering
\begin{tabular}{c|c|ccc|ccc}
\toprule[2pt]
\multirow{2}{*}[-4pt]{ Dataset } & Method 
 & \multicolumn{3}{c|}{ Full Participation(10 clients)} & \multicolumn{3}{c}{ Partial Participation(50 clients)} \\
\cmidrule{2-8} & \#Partition & IID & $\beta=0.5$ & $\beta=0.1$ & IID & $\beta=0.5$ & $\beta=0.2$ \\
\midrule 
\multirow{3}{*}[0pt]{SVHN} & FedAvg & 92.74 & 91.24 & 75.24 & 91.29 & 89.29 & 84.70 \\
 & Best Baseline & 93.50 & 92.46 & 76.26 & 91.67 & 91.27 & 87.78\\
 & FedMR(Ours) & 93.04 & 92.50 & 78.19 & 91.77 & 91.31 & 89.37\\
\bottomrule[2pt]
\end{tabular}
\end{table}
\subsection{Performance on General Setting}\label{sec:ablation_appendix}
To verify the effectiveness, we conduct the experiments on SHAKESPEARE~(\cite{shakespeare}) and SVHN, whose data distributions follow the non-PCDD setting. Specially, SHAKESPEARE is also commonly used as real-world data heterogeneity challenge in federated learning. We select 50 clients of the SHAKESPEARE into federated training and our method outperforms all methods and achieves improvement of 3.86\% to FedAvg and 1.10\% to the best baseline. As for SVHN, we split it by Dirichlet distribution as many previous FL works~(\cite{ye2,fedprox,fedskip}). According to the results in the Table~\ref{tb:nonpdd}, we can find FedMR still remains applicable and achieves comparable performance.

\begin{wrapfigure}{r}{0.4\linewidth}
\vspace{-20pt}
\begin{center}
\includegraphics[width=0.4\textwidth]{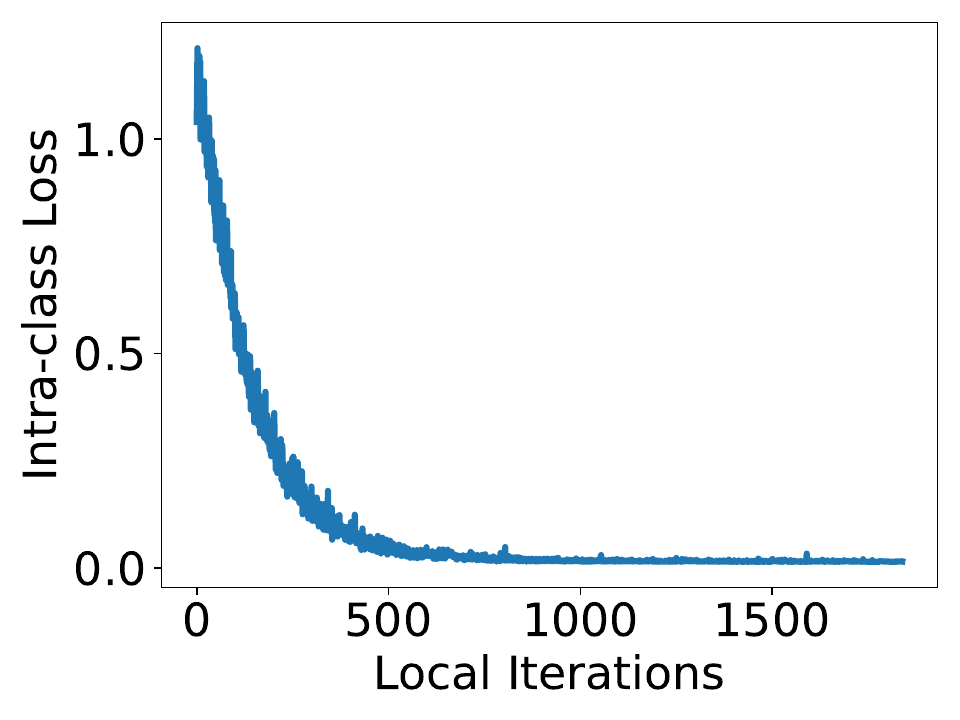}
\end{center}
\vspace{-10pt}
\caption{Intra-class loss value of our FedMR during the federated training.}
\label{fig:intra}
\vspace{-30pt}
\end{wrapfigure}

\subsection{Lite Version}\label{sec:lite_version_appendix}
Here we introduce our light version for FedMR. Since our reshaping loss will introduce additional computation times, we randomly select part of samples in the mini-batch to compute inter-class loss for a computation friendly version. In Table~\ref{tab:light-version}, we randomly select 0, 10, 50 and 128 samples in the mini-batch on the four dataset~(P10C10 for CIFAR100 and P5C2 for the others) to calculate the inter-class loss and record the accuracy. As can be seen, the computation time is reduced significantly and simultaneously maintains the competing performance.

\subsection{Ablation}\label{sec:ablation_appendix}
In this part, we provide detailed results of ablation on FMNIST, SVHN, CIFAR10 and CIFAR100. As shown in the Table~\ref{tab:ablation_all}, the intra-class loss generally plays a more important role in the performance improvement of FedMR on four datasets under PCDD. Their combination complements each other and thus shows a best improvement than any of the single loss, confirming our design from the joint perspective to prevent the collapse under PCDD. 

\end{document}